\newenvironment{proof}{\paragraph{Proof:}}{\hfill$\square$}
\newcommand{\todo}[2]{\textcolor{red}{TODO(#1): #2}}
\newtheorem{theorem}{}[section]
\tikzstyle{every node}=[circle, draw, fill=black!50, inner sep=0pt, minimum width=20pt]
\tikzstyle{input}=[circle,
\tikzstyle{input2}=[circle,
\tikzstyle{matrx}=[rectangle,
\tikzstyle{matrx2}=[rectangle,
\tikzstyle{vecArrow} = [thick, decoration={markings,mark=at position
\tikzstyle{innerWhite} = [semithick, white,line width=1.4pt, shorten >= 4.5pt]
\tikzstyle{background}=[rectangle,
\newtheorem{lemma}{Lemma}
\newtheorem{definition}{Definition}
\newtheorem{remark}{Remark}
\title{TripleSpin - a generic compact paradigm for fast
machine learning computations}
\author{
Krzysztof Choromanski\\
Google Research\\
\texttt{kchoro@google.com}
\and
Francois Fagan\\
Columbia University\\
\texttt{ff2316@columbia.edu}
\and
Cedric Gouy-Pailler\\
CEA, LIST, LADIS \\
\texttt{cedric.gouy-pailler@cea.fr}
\and
Anne Morvan \\
CEA, LIST, LADIS \& Universite Paris-Dauphine \\
\texttt{anne.morvan@cea.fr} \\
\and
Tamas Sarlos \\
Google Research \\
\texttt{stamas@google.com} \\
\and
Jamal Atif \\
Universite Paris-Dauphine \\
\texttt{jamal.atif@dauphine.fr} 
}
\date{May 15, 2016}
\begin{document}

\maketitle

\begin{abstract}
We present a generic compact computational framework relying on structured random matrices that can be applied to
speed up several machine learning algorithms with almost no loss of accuracy. The applications include 
new fast LSH-based algorithms, efficient kernel computations via random feature maps, convex optimization algorithms, quantization techniques and many more.
Certain models of the presented paradigm are even more compressible since they apply only bit matrices.
This makes them suitable for deploying on mobile devices.    
All our findings come with strong theoretical guarantees. In particular, as a byproduct of the presented techniques and by using relatively new Berry-Esseen-type CLT for random vectors, we 
give the first theoretical guarantees for one of the most efficient existing LSH algorithms based on the $\textbf{HD}_{3}\textbf{HD}_{2}\textbf{HD}_{1}$
structured matrix \cite{indyk2015}. These guarantees as well as theoretical results for other
aforementioned applications follow from the same general theoretical principle that we present in the paper.
Our structured family contains as special cases all previously considered structured schemes, including the recently introduced $P$-model \cite{chor_sind_2016}.
Experimental evaluation confirms the accuracy and efficiency of TripleSpin matrices.
\end{abstract}

\section{Introduction}
Consider a randomized machine learning algorithm $\mathcal{A}_{\textbf{G}}(\mathcal{X})$ on a dataset $\mathcal{X}$ and parameterized by the Gaussian matrix $\textbf{G}$ with i.i.d. entries taken from $\mathcal{N}(0,1)$.
We assume that $\textbf{G}$ is used to calculate Gaussian projections that are then
passed to other, possibly highly nonlinear functions.


Many machine learning algorithms are of this form. Examples include several variants of the Johnson-Lindenstrauss Transform applying random projections to reduce data dimensionality while approximately preserving Euclidean distance \cite{liberty_jlt, ailon2013}, quantization techniques using random projection trees, where splitting in each node is determined by a projection of data onto Gaussian direction \cite{dasgupta08}, algorithms solving convex optimization problems with random sketches of Hessian matrices \cite{pilanci, pilanci2}, kernel approximation techniques applying random feature maps produced from linear projections with Gaussian matrices followed by nonlinear mappings \cite{chor_sind_2016, rahimi, choromanska2015binary}, several LSH-schemes  \cite{indyk2015,charikar, terasawa} (such as some of the most effective cross-polytope LSH methods) and many more.

If data is high-dimensional then computing random projections $\{\textbf{G}\textbf{x} : \textbf{x} \in \mathcal{X} \}$ for $\textbf{G} \in \mathbb{R}^{m \times n}$ in time $\Theta(mn|\mathcal{X}|)$ often occupies a significant fraction of the overall 
run time. Furthermore, storing matrix $\textbf{G}$ frequently becomes a bottleneck in terms of space complexity. In this paper 
we propose a ``structured variant'' of the algorithm $\mathcal{A}$, where Gaussian matrix $\textbf{G}$ is replaced by a structured matrix 
$\textbf{G}_{struct}$ taken from the defined by us \textit{TripleSpin}-family of matrices. The name comes from the fact that 
each such matrix is a product of three other matrices, building components, which include rotations.

Replacing $\textbf{G}$ by $\textbf{G}_{struct}$ gives computational speedups since $\textbf{G}_{struct}\textbf{x}$ can be calculated in $o(mn)$ time: with the use of techniques such as Fast Fourier Transform, time complexity is reduced to $O(n\log m)$.
Furthermore, with matrices from the \textit{TripleSpin}-family space complexity can be also substantially reduced --- to sub-quadratic, usually at most linear, sometimes even constant. 

To the best of our knowledge, we are the first to provide a comprehensive theoretical explanation of the effectiveness of the structured approach. So far such an
explanation was given only for some specific applications and specific structured matrices. The proposed \textit{TripleSpin}-family contains all previously considered structured matrices as special cases, including the recently introduced $P$-model \cite{chor_sind_2016}, yet its flexible three-block structure provides mechanisms for constructing many others.
Our structured model is also the first one that proposes purely discrete Hadamard-based constructions with strong theoretical guarantees. We empirically show that these surpass their unstructured counterparts.

As a byproduct, we provide a theoretical explanation of the efficiency of the cross-polytope LSH method based on the $\textbf{HD}_{3}\textbf{HD}_{2}\textbf{HD}_{1}$ structured matrix \cite{indyk2015}, where $\textbf{H}$ stands for the Hadamard matrix and $\textbf{D}_{i}$s are random diagonal $\pm 1$-matrices. Thus we solve the open problem posted in \cite{indyk2015}. These guarantees as well as theoretical results for other aforementioned applications arise from the same general theoretical principle that we present in the paper. Our theoretical methods apply relatively new Berry-Esseen type Central Limit Theorem results
for random vectors.

\section{Related work}
\label{sec:related_work}

Structured matrices were previously used mainly in the context of the Johnson-Lindenstrauss Transform (JLT), where the goal is to linearly transform 
high-dimensional data and embed it into a much lower dimensional space in such a way that the Euclidean distance is approximately preserved \cite{liberty_jlt, ailon2013, ailon2006approximate}. 
Most of these structured constructions involve sparse or circulant matrices \cite{ailon2006approximate,  
vybiral2011variant} providing computational speedups and space compression.



Specific structured matrices were used to approximate angular distance and Gaussian kernels \cite{choromanska2015binary, feng}. 
Very recently \cite{chor_sind_2016} structured matrices coming from the so-called \textit{P-model}, were applied to speed up random feature map computations of some special kernels (angular, arc-cosine and Gaussian). The presented techniques did not work for discrete structured constructions such as the $\textbf{HD}_{3}\textbf{HD}_{2}\textbf{HD}_{1}$ structured matrix since they focus on matrices with low (polylog) chromatic number of the corresponding coherence graphs and these do not include matrices such as $\textbf{HD}_{3}\textbf{HD}_{2}\textbf{HD}_{1}$ or their
direct non-discrete modifications.

The $TripleSpin$-mechanism gives in particular a highly parameterized family of structured methods for approximating general kernels with random feature maps.
Among them are purely discrete computational schemes providing the most aggressive
compression with just  minimal loss of accuracy.

Several LSH methods use random Gaussian matrices to construct compact codes of data points and in turn speed up such tasks as approximate nearest neighbor search. Among them are
cross-polytope methods proposed in \cite{terasawa}. In the angular cross-polytope setup the hash family $\mathcal{H}$ is designed for points taken from the unit sphere $S^{n-1}$, where $n$ stands for data dimensionality. To construct hashes a random matrix $\textbf{G} \in \mathbb{R}^{n \times n}$ with i.i.d. Gaussian entries 
is built. The hash of a given data point $\textbf{x} \in S^{n-1}$ is defined as:
$
h(\textbf{x}) = \eta(\frac{\textbf{G}\textbf{x}}{\|\textbf{G}\textbf{x}\|_{2}}),
$
where $\eta(\textbf{y})$ returns the closest vector to $\textbf{y}$ from the set $\{\pm 1 \textbf{e}_{i}\}_{1 \leq i \leq n}$, where $\{\textbf{e}_{i}\}_{1 \leq i \leq n}$ stands for the canonical basis.
The fastest known variant of the cross-polytope LSH \cite{indyk2015} replaces unstructured Gaussian matrix $\textbf{G}$
with the product $\textbf{HD}_{3}\textbf{HD}_{2}\textbf{HD}_{1}$. No theoretical guarantees regarding that variant were known. We provide a theoretical explanation here. As in the previous setting, matrices from the $TripleSpin$ model lead to several fast structured cross-polytope LSH algorithms not considered before. 



Recently a new method for speeding up algorithms solving convex optimization problems by approximating Hessian matrices (a bottleneck of the overall computational pipeline) with their random sketches was proposed \cite{pilanci, pilanci2}. 
One of the presented sketches, the so-called \textit{Newton Sketch}, leads to the sequence of iterates $\{\textbf{x}^{t}\}_{t=0}^{\infty}$ for optimizing given function $f$ given by the following recursion:
\begin{equation}
\textbf{x}^{t+1} = argmin_{\textbf{x}} 
\{\frac{1}{2}\|\textbf{S}^{t}\nabla^{2}f(\textbf{x}^{t})^{\frac{1}{2}}(\textbf{x}-\textbf{x}^{t})\|_{2}^{2} + (\nabla f(\textbf{x}^{t}))^{T} \cdot (\textbf{x}-\textbf{x}^{t})\},
\end{equation}

where $\{\textbf{S}^{t} : t=1,...,\}$ is the set of the so-called \textit{sketch matrices}. Initially the sub-Gaussian sketches based on i.i.d. sub-Gaussian random variables
were used. The disadvantage of the sub-Gaussian sketches $\textbf{S} \in \mathbb{R}^{m \times n}$ lies in the fact that computing the sketch of the given matrix 
$\textbf{A} \in \mathbb{R}^{n \times d}$, needed for convex optimization with sketches, requires $O(mnd)$ time. Thus the method is too slow in practice.
This is the place, where structured matrices can be applied. Some structured approaches were already considered in \cite{pilanci}, where sketches based on randomized orthonormal 
systems were proposed. In that approach a sketching matrix $\textbf{S} \in \mathbb{R}^{m \times n}$ is constructed by sampling i.i.d. rows of the form 
$\textbf{s}^{T} = \sqrt{n} \textbf{e}^{T}_{j} \textbf{HD}$ with probability $\frac{1}{n}$ for $j=1,...,n$, where $\textbf{e}^{T}_{j}$ is chosen uniformly at random
from the set of the canonical vectors and $\textbf{D}$ is a random diagonal $\pm 1$-matrix.
We show that our class of $\textit{TripleSpin}$ matrices can also be used in that setting.

\section{The \textit{TripleSpin}-family}
\label{sec:model}

We now present the \textit{TripleSpin}-family. 
If not specified otherwise, the random diagonal matrix $\textbf{D}$
is a diagonal matrix with diagonal entries taken independently at random from $\{-1,+1\}$.
For a sequence $(t_{1},...,t_{n})$ we denote by $\textbf{D}_{t_{1},...,t_{n}}$ a diagonal matrix with diagonal equal to $(t_{1},...,t_{n})$.
For a matrix $\textbf{A} \in \mathbb{R}^{n \times n}$ let $\|\textbf{A}\|_{F}$ denote its Frobenius and $\|A\|_{2} = \sup_{\textbf{x} \neq 0} \frac{\|\textbf{Ax}\|_{2}}{\|\textbf{x}\|_{2}}$ its spectral norm respectively. We denote by $\textbf{H}$ the $L_{2}$-normalized Hadamard matrix. We say that $\textbf{r}$ is a random Rademacher vector if every element of $\textbf{r}$ is chosen independently at random from $\{-1,+1\}$.

For a vector $\textbf{r} \in \mathbb{R}^{k}$ and $n>0$ let $\textbf{C}(\textbf{r},n) \in \mathbb{R}^{n \times nk}$ be a matrix,
where the first row is of the form $(\textbf{r}^{T},0,...,0)$ and each subsequent row is obtained from the previous one by right-shifting in a circulant manner the previous one by $k$. For a sequence of matrices $\textbf{W}^{1},...,\textbf{W}^{n} \in \mathbb{R}^{k \times n}$ we denote by $\textbf{V}(\textbf{W}^{1},...,\textbf{W}^{n}) \in \mathbb{R}^{nk \times n}$ a matrix obtained by stacking vertically matrices: $\textbf{W}^{1},...,\textbf{W}^{n}$.

Each structured matrix $\textbf{G}_{struct} \in \mathbb{R}^{n \times n}$ from the \textit{TripleSpin}-family is a product of three main structured components, i.e.:
\begin{equation}
\textbf{G}_{struct} = \textbf{M}_{3}\textbf{M}_{2}\textbf{M}_{1},
\end{equation}

where matrices $\textbf{M}_{1}, \textbf{M}_{2}$ and $\textbf{M}_{3}$ satisfy the following conditions:

\begin{framed}
\textbf{Condition 1:} Matrices: $\textbf{M}_{1}$ and $\textbf{M}_{2}\textbf{M}_{1}$ are $(\delta(n),p(n))$-balanced isometries. \\
\textbf{Condition 2:} $\textbf{M}_{2} = \textbf{V}(\textbf{W}^{1},...,\textbf{W}^{n})\textbf{D}_{\rho_{1},...,\rho_{n}}$ for some $(\Delta_{F},\Delta_{2})$-smooth set ${\textbf{W}^{1},...,\textbf{W}^{n}} \in \mathbb{R}^{k \times n}$ and some i.i.d sub-Gaussian random variables $\rho_{1},...,\rho_{n}$ with sub-Gaussian norm $K$. \\
\textbf{Condition 3:} $\textbf{M}_{3} = \textbf{C}(\textbf{r},n)$
for $\textbf{r} \in \mathbb{R}^{k}$, where $\textbf{r}$ is random Rademacher or Gaussian.
\end{framed}

If all three conditions are satisfied then we say that a matrix $\textbf{M}_{3}\textbf{M}_{2}\textbf{M}_{1}$ is a \textit{TripleSpin}-matrix with parameters: $\delta(n),p(n),K,\Lambda_{F},\Lambda_{2}$.
Below we explain these conditions. 

\begin{definition}[$(\delta(n),p(n))$-balanced matrices]
A randomized matrix $\textbf{M} \in \mathbb{R}^{n \times m}$ is $(\delta(n),p(n))$-balanced if for every $\textbf{x} \in \mathbb{R}^{m}$ with $\|\textbf{x}\|_{2} = 1$ we have: $\mathbb{P}[\|\textbf{M}\textbf{x}\|_{\infty} > \frac{\delta(n)}{\sqrt{n}}] \leq p(n)$.
\end{definition}

\begin{remark}
\label{balanceness_remark}
One can take as $\textbf{M}_{1}$ a matrix $\textbf{HD}_{1}$,
since as we will show in the Appendix, matrix $\textbf{HD}_{1}$ is $(\log(n),2ne^{-\frac{\log^{2}(n)}{8}})$-balanced. 
\end{remark}

\begin{definition}[$(\Delta_{F},\Delta_{2})$-smooth sets]
A deterministic set of matrices $\textbf{W}^{1},...,\textbf{W}^{n} \in \mathbb{R}^{k \times n}$ is $(\Lambda_{F},\Lambda_{2})$-smooth if:
\begin{itemize}
\item $\|\textbf{W}^{i}_{1}\|_{2} = .. = \|\textbf{W}^{i}_{n}\|_{2}$
      for $i = 1,...,n$, where $\textbf{W}^{i}_{j}$ stands for the  
      $j^{th}$ column of $\textbf{W}^{i}$,
 \item for $i \neq j$ and $l = 1,...,n$ we have: 
       $(\textbf{W}^{i}_{l})^{T} \cdot \textbf{W}^{j}_{l} = 0$, 
\item $\max_{i,j} \|(\textbf{W}^{j})^{T}\textbf{W}^{i}\|_{F} \leq \Lambda_{F}$ and $\max_{i,j} \|(\textbf{W}^{j})^{T}\textbf{W}^{i}\|_{2} \leq \Lambda_{2}.$
\end{itemize}
\end{definition}

\begin{remark}
If the unstructured matrix $\textbf{G}$ has rows taken from the general multivariate Gaussian distribution with diagonal covariance matrix $\Sigma \neq \textbf{I}$ then one needs to rescale vectors $\textbf{r}$ accordingly.
For clarity we assume here that $\Sigma = \textbf{I}$ and we present our theoretical results for that setting.
\end{remark}

All structured matrices previously considered are special cases of the $TripleSpin$-family (for clarity we will explicitly show it for some important special cases). Others, not considered before, are also covered by the $TripleSpin$-family. We have:

\begin{lemma}
\label{simple_lemma}
Matrices $\textbf{G}_{circ}\textbf{D}_{2}\textbf{HD}_{1}$, $\sqrt{n}\textbf{HD}_{3}\textbf{HD}_{2}\textbf{HD}_{1}$
and $\sqrt{n}\textbf{HD}_{g_{1},...,g_{n}}\textbf{HD}_{2}\textbf{HD}_{1}$, where $\textbf{G}_{circ}$ is Gaussian circulant, are valid $TripleSpin$-matrices for $\delta(n) = \log(n)$, $p(n) = 2ne^{-\frac{\log^{2}(n)}{8}}$, $K = 1$, $\Lambda_{F} = O(\sqrt{n})$ and $\Lambda_{2} = O(1)$. The same is true if one replaces $\textbf{G}_{circ}$ by a Gaussian Hankel or Toeplitz matrix.
\end{lemma}

\subsection{Stacking together \textit{TripleSpin}-matrices}

We described \textit{TripleSpin}-matrices as square matrices. In practice we are not restricted to square matrices. We can construct an $m \times n$ \textit{TripleSpin} matrix for $m \leq n$ from the square $n \times n$ \textit{TripleSpin}-matrix by taking its first $m$ rows. We can then stack vertically these independently constructed $m \times n$ matrices to obtain an $k \times n$ matrix for both: $k \leq n$ and $k > n$. We think about $m$ as another parameter of the model that tunes the "structuredness" level. Larger values of $m$ indicate more structured approach while smaller values lead to more random matrices (the $m=1$ case is the fully unstructured one).

\section{Computing general kernels with \textit{TripleSpin}-matrices}
Previous works regarding approximating kernels with structured matrices covered only some special kernels, namely: Gaussian, arc-cosine and angular kernels.
We explain here how structured approach (in particular our \textit{TripleSpin}-family) can be used to approximate well most kernels.
Theoretical guarantees that cover also this special case are given in the subsequent section.

For kernels that can be represented as an expectation of Gaussian random variables it is natural to approximate them using structured matrices. We start our analysis with the so-called \textit{Pointwise Nonlinear Gaussian kernels (PNG)} which are of the form:
\begin{equation}
\label{eq:main_function_simple}
\kappa_{f,\mu,\Sigma}(\textbf{x},\textbf{y}) = \mathbb{E}\left[f(\textbf{g}^\top \cdot \textbf{x})\cdot f(\textbf{g}^\top \cdot \textbf{y})\right]
\end{equation}
where: $\textbf{x},\textbf{y} \in \mathbb{R}^{n}$, the expectation is over a multivariate Gaussian $\textbf{g}\sim\mathcal{N}(\mu,\Sigma)$ and $f$ is a fixed nonlinear function $\mathbb{R} \rightarrow \mathbb{R}$ (the positive-semidefiniteness of the kernel follows from it being an expectation of dot products). 
The expectation, interpreted in the Monte-Carlo approximation setup, leads to an unbiased estimation by a sum of the normalized dot products $\frac{1}{k}f(\textbf{G} \textbf{x})^\top\cdot f(\textbf{G} \textbf{y})$, where $\textbf{G} \in \mathbb{R}^{k \times n}$ is a random Gaussian matrix and $f$ is applied pointwise, i.e. $f((v_{1},...,v_{k})):=(f(v_{1}),...,f(v_{k}))$. 

In this setting matrices from the $TripleSpin$-family can replace $\textbf{G}$ with diagonal covariance matrices $\Sigma$ to speed up computations and reduce storage complexity. 
This idea of using random structured matrices to evaluate such kernels is common in the literature, e.g. \cite{choromanska2015binary,le2013fastfood}, but no theoretical guarantees were given so far for general PNG kernels (even under the restriction of diagonal $\Sigma$).

PNGs define a large family of kernels characterized by $f,\mu,\Sigma$. 
Prominent examples include the Euclidean and angular distance \cite{choromanska2015binary}, the arc-cosine kernel \cite{NIPS2009_3628} and the sigmoidal neural network~\cite{Williams:1998:CIN:295919.295941}. 
Since sums of kernels are again kernels, we can construct an even larger family of kernels by summing PNGs. A simple example is the Gaussian kernel which can be represented as a sum of two PNGs with $f$ replaced by the trigonometric functions: $\sin$ and $\cos$.

Since the Laplacian, exponential and rational quadratic kernel can all be represented as a mixture of
Gaussian kernels with different variances,
they can be easily approximated by a finite sum of PNGs. Remarkably, virtually all kernels can be represented as a (potentially infinite) sum of PNGs with diagonal $\Sigma$.
Recall that kernel $\kappa$ is stationary if $\kappa(x,y)=\kappa(x-y)$ for all $x,y \in \mathbb{R}^{n}$ and non-stationary otherwise.
Harnessing Bochner's and Wiener’s Tauberian theorem~\cite{samo2015generalized}, we show that all stationary kernels may be approximated arbitrarily well by sums of PNGs. 
\begin{theorem}[stationary kernels] 
\label{thm:stationary}
The family of functions
\begin{align*}
\kappa_K(\textbf{x},\textbf{y})&:=\sum_{k=1}^K\alpha_k (\mathbb{E}[\cos(\textbf{g}_k^\top\cdot \textbf{x})\cos(\textbf{g}_k^\top\cdot \textbf{y})] + \mathbb{E}[\sin(\textbf{g}_k^\top\cdot \textbf{x})\sin(\textbf{g}_k^\top\cdot \textbf{y})])
\end{align*}
with $\textbf{g}_k\sim\mathcal{N}(\mu_k,\mbox{diag}((\sigma_{k}^1)^2,...,(\sigma_k^d)^2))$, $\mu_k,\sigma_k\in\mathbb{R}^{n}$, $\alpha_k\in\mathbb{R}$, $K\in\mathbb{N}\cup\{\infty\}$ is dense in the family of stationary real-valued kernels with respect to pointwise convergence.
\end{theorem}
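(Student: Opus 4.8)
The plan is to move the problem to the spectral (Fourier) side, where it becomes a weak-approximation statement for finite measures. First I would simplify a single summand of $\kappa_K$. Using the identity $\cos a\cos b+\sin a\sin b=\cos(a-b)$ and the characteristic function of a Gaussian, for $\textbf{g}_k\sim\mathcal{N}(\mu_k,\Sigma_k)$ with $\Sigma_k=\mathrm{diag}((\sigma_k^1)^2,\dots,(\sigma_k^n)^2)$ and $\textbf{z}:=\textbf{x}-\textbf{y}$,
\begin{equation*}
\mathbb{E}[\cos(\textbf{g}_k^\top\textbf{x})\cos(\textbf{g}_k^\top\textbf{y})]+\mathbb{E}[\sin(\textbf{g}_k^\top\textbf{x})\sin(\textbf{g}_k^\top\textbf{y})]=\mathbb{E}[\cos(\textbf{g}_k^\top\textbf{z})]=\cos(\mu_k^\top\textbf{z})\,e^{-\frac12\textbf{z}^\top\Sigma_k\textbf{z}}.
\end{equation*}
Hence $\kappa_K(\textbf{x},\textbf{y})=\sum_{k}\alpha_k\cos(\mu_k^\top\textbf{z})e^{-\frac12\textbf{z}^\top\Sigma_k\textbf{z}}$, and for non-negative weights this is exactly $\int_{\mathbb{R}^n}\cos(\omega^\top\textbf{z})\,d\nu_K(\omega)$ with $\nu_K=\sum_k\alpha_k\,\mathcal{N}(\mu_k,\Sigma_k)$ a finite non-negative Borel measure (the sine part of the integral drops out, so symmetrizing $\nu_K$ changes nothing). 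In other words, the functions in the statement are precisely the stationary kernels whose spectral measure is a finite mixture of diagonal-covariance Gaussians; the case $K=1$, $\mu_1=\textbf{0}$, $\Sigma_1=\textbf{I}$ already recovers the Gaussian kernel, and $\alpha_k\ge0$ guarantees positive-definiteness, so it suffices to produce approximations using finitely many non-negative terms.

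Next I would invoke Bochner's theorem: every continuous real-valued stationary kernel $\kappa(\textbf{x},\textbf{y})=\kappa(\textbf{z})$ has the representation $\kappa(\textbf{z})=\int_{\mathbb{R}^n}\cos(\omega^\top\textbf{z})\,d\nu(\omega)$ for a finite non-negative symmetric Borel measure $\nu$ with $\nu(\mathbb{R}^n)=\kappa(\textbf{0})$ (symmetry and vanishing of the sine integral follow from $\kappa$ being real-valued). For each fixed $\textbf{z}$ the map $\omega\mapsto\cos(\omega^\top\textbf{z})$ is bounded and continuous, so if a sequence of spectral measures converges weakly, $\nu_K\Rightarrow\nu$, then the corresponding kernels converge pointwise: $\int\cos(\omega^\top\textbf{z})\,d\nu_K(\omega)\to\kappa(\textbf{z})$ for every $\textbf{z}$. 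Combined with the previous paragraph, the theorem reduces to the purely measure-theoretic claim that finite mixtures of diagonal-covariance Gaussians are weakly dense in the set of finite non-negative Borel measures on $\mathbb{R}^n$.

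I would prove this last claim in three elementary steps. (a) Replacing $\nu$ by its restriction to a ball $\{\|\omega\|\le R\}$ changes $\kappa(\textbf{z})$ by at most $\nu(\|\omega\|>R)$, uniformly in $\textbf{z}$, and this tends to $0$ as $R\to\infty$; so we may assume $\nu$ is compactly supported. (b) A compactly supported finite measure is a weak limit of discrete measures $\sum_j\beta_j\delta_{\omega_j}$ with $\beta_j=\nu(A_j)\ge0$, obtained by refining finite measurable partitions $\{A_j\}$ of its support and using uniform continuity of the test functions there. (c) Each Dirac mass $\delta_{\omega_j}$ is the weak limit, as $\varepsilon\downarrow0$, of the isotropic Gaussians $\mathcal{N}(\omega_j,\varepsilon^2\textbf{I})$, whose covariance is diagonal. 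A diagonalization over the parameters $R$, partition mesh, and $\varepsilon$ yields a single sequence of finite diagonal-covariance Gaussian mixtures with non-negative weights converging weakly to $\nu$; transporting this back through the identities of the first paragraph produces kernels $\kappa_K$ converging pointwise to $\kappa$. (An alternative to (b)--(c) is to first mollify $\nu$ by a narrow Gaussian to make it absolutely continuous and then use the $L^1$-density of Gaussian translates supplied by Wiener's Tauberian theorem, as in \cite{samo2015generalized}.)

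The one genuinely substantive point is step (b)/(c): building the approximating mixtures so that, simultaneously, (i) all weights stay non-negative, so each approximant is itself a legitimate positive-definite kernel and the limit therefore lies in the right space; (ii) all covariances are \emph{diagonal}, which is what forces the isotropic-mollifier construction rather than an appeal to arbitrary Gaussian mixtures; and (iii) the mass of $\nu$ near infinity is controlled, so that truncation to a finitely supported approximant is harmless. Everything else — the trigonometric rewriting, the Gaussian characteristic function, Bochner's theorem, and the passage from weak convergence of spectral measures to pointwise convergence of kernels — is standard bookkeeping.
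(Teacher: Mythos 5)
Your proof is correct, but it takes a genuinely different route from the paper's. The paper does not reprove density at all: it quotes Theorem 3 of \cite{samo2015generalized} (which is where Bochner's and Wiener's Tauberian theorems do the analytic work), stating that $\sum_{k}\alpha_k h(\tau\odot\sigma_k)\cos(2\pi\mu_k^\top\tau)$ is dense for any positive, continuous, integrable psd $h$, and then devotes its entire proof to the converse direction of your first display: choosing $h$ Gaussian and unwinding $\exp(-2\pi^2\|\sigma_k\odot\tau\|^2)\cos(2\pi\mu_k^\top\tau)$ step by step (one-dimensional Bochner, substitution, the angle-sum identity, rewriting a scalar Gaussian as $\textbf{g}^\top\tau$) into $\mathbb{E}[\cos(2\pi\,\textbf{g}_k^\top\tau)]$, after which $2\pi$ is absorbed and the sign constraints on $\mu_k,\sigma_k$ relaxed. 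You instead stay on the spectral side throughout: you get the same identity in two lines via the Gaussian characteristic function, invoke Bochner for the target kernel, and then prove from scratch that non-negative finite mixtures of diagonal-covariance Gaussians are weakly dense among finite spectral measures (truncation, partition-discretization, isotropic mollification), the pointwise error at each $\textbf{z}$ being controlled by the tail mass, the mesh times $\|\textbf{z}\|$, and $\varepsilon^2\|\textbf{z}\|^2$, so the diagonal sequence converges pointwise as required. What your route buys is a self-contained argument that avoids Wiener's Tauberian theorem and makes the spectral-mixture picture explicit, with approximants that are themselves psd; what the paper's route buys is brevity, delegating the density statement to the cited reference. Both arguments (yours through Bochner, the paper's through the quoted theorem) implicitly concern continuous stationary kernels, which is the intended reading of the statement, so this is not a gap relative to the paper.
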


This family corresponds to the \textit{spectral mixture kernels} of \cite{wilson2013gaussian}. We can extend these results to arbitrary, non-stationary, kernels; the precise statement and its proof can be found in the Appendix.

Theorem \ref{thm:stationary} and its non-stationary analogue show that the family of sums of PNGs contains virtually all kernel functions. If a kernel can be well approximated by the sum of only a small number of PNGs then we can use the $TripleSpin$-family to efficiently evaluate it. It has been found in the Gaussian Process literature that often a small sum of kernels tuned using training data is sufficient to explain phenomena remarkably well \cite{wilson2013gaussian}. The same should also be true for the sum of PNGs. This suggests a methodology for learning the parameters of a sum of PNGs from training data and then applying it out of sample. 
We leave this investigation for future research.

\section{Theoretical results}

We now show that matrices from the \textit{TripleSpin}-family can replace their unstructured counterparts in many machine learning algorithms with minimal loss of accuracy.

Let $\mathcal{A}$ be a randomized machine learning algorithm operating on a dataset $\mathcal{X} \subseteq \mathbb{R}^{n}$. We assume that $\mathcal{A}$ uses certain functions $f_{1},...,f_{s}$ such that each
$f_{i}$ uses a given Gaussian matrix $\textbf{G}$ (matrix $\textbf{G}$ can be the same or different for different functions) with independent rows taken from a multivariate Gaussian distribution with diagonal covariance matrix. $\mathcal{A}$ may also use other functions that do not depend directly on Gaussian matrices but may for instance operate on outputs of $f_{1},...,f_{s}$. We assume that each $f_{i}$ applies $\textbf{G}$ to vectors from some linear space $\mathcal{L}_{f_{i}}$ of dimensionality at most $d$.

\begin{remark}
In the kernel approximation setting with random feature maps one can match each pair of vectors $\textbf{x},\textbf{y} \in \mathcal{X}$ with a different $f=f_{\textbf{x},\textbf{y}}$.
Each $f$ computes the approximate value of the kernel for vectors $\textbf{x}$ and $\textbf{y}$.
Thus in that scenario $s = {|\mathcal{X}| \choose 2}$ and $d=2$ (since one can take: $\mathcal{L}_{f(\textbf{x},\textbf{y})} = span(\textbf{x},\textbf{y})$).
\end{remark}
\begin{remark}
In the vector quantization algorithms using random projection trees one can take $s=\nobreak 1$ (the algorithm $\mathcal{A}$ itself is a function $f$) and $d = d_{intrinsic}$, where $d_{intrinsic}$ is an intrinsic dimensionality of a given dataset $\mathcal{X}$ (random projection trees are often used if $d_{intrinsic} \ll n$).
\end{remark}

Fix a function $f$ of $\mathcal{A}$ using an unstructured Gaussian matrix $\textbf{G}$ and applying it on the linear space $\mathcal{L}_{f}$ of dimensionality $l \leq d$. 
Note that the outputs of $\textbf{G}$
on vectors from $\mathcal{L}_{f}$ are determined by the sequence: 
$(\textbf{G}\textbf{x}^{1},...,\textbf{G}\textbf{x}^{l})$,
where $\textbf{x}^{1},...,\textbf{x}^{l}$ stands for a fixed orthonormal basis of $\mathcal{L}_{f}$. Thus they are determined by a following vector (obtained from "stacking together" vectors $\textbf{Gx}^{1}$,...,$\textbf{Gx}^{l}$):
$$\textbf{q}_{f} = ((\textbf{G}\textbf{x}^{1})^{T},...,(\textbf{G}\textbf{x}^{l})^{T})^{T}.$$ 

Notice that $\textbf{q}_{f}$ is a Gaussian vector with independent entries (this comes from the fact that rows of $\textbf{G}$ are independent and the observation that the projections of the Gaussian vector on the orthogonal directions are independent). Thus the covariance matrix of $\textbf{q}_{f}$ is an identity.

\begin{definition}[$\epsilon$-similarity]
We say that a multivariate Gaussian distribution $\textbf{q}_{f}(\epsilon)$ is $\epsilon$-close to the multivariate Gaussian distribution $\textbf{q}_{f}$ with covariance matrix $\textbf{I}$ if its covariance matrix is equal to $1$ on the diagonal and has all other entries of absolute value at most $\epsilon$.
\end{definition}

To measure the "closeness" of the algorithm $\mathcal{A}^{\prime}$ with the \textit{TripleSpin}-model matrices to $\mathcal{A}$, we will measure how "close" the corresponding vector $\textbf{q}^{\prime}_{f}$ for $\mathcal{A}^{\prime}$ is to $\textbf{q}_{f}$ in the distribution sense.
The following definition proposes a quantitative way to measure this
closeness.
Without loss of generality we will assume now that each structured matrix consists of just one block since different blocks of the structured matrix are chosen independently.

\begin{definition}
\label{closeness}
Let $\mathcal{A}$ be as above. For a given $\eta, \epsilon > 0$ the class of algorithms $\mathcal{A}_{\eta, \epsilon}$ is given as a set of algorithms $\mathcal{A}^{\prime}$ obtained from $\mathcal{A}$ by replacing unstructured Gaussian matrices with their structured counterparts such that for any $f$ with $dim(\mathcal{L})_{f} = l$ and any convex set $S \in \mathbb{R}^{ml}$ the following holds:
\begin{equation}
|\mathbb{P}[\textbf{q}_{f}(\epsilon) \in S] - \mathbb{P}[\textbf{q}^{\prime}_{f} \in S]| \leq \eta,
\end{equation}
where $\textbf{q}_{f}(\epsilon)$ is some multivariate Gaussian distribution that is $\epsilon$-similar to $\textbf{q}_{f}$.
\end{definition}

The smaller $\epsilon$, the closer in distribution are $\textbf{q}_{f}$ and $\textbf{q}_{f}^{\prime}$ and the more accurate the structured version $\mathcal{A}^{\prime}$ of $\mathcal{A}$ is.
Now we show that \textit{TripleSpin}-matrices lead to algorithms from $\mathcal{A}_{\eta, \epsilon}$ with $\eta, \epsilon \ll 1$.

\begin{theorem}[structured ml algorithms]
\label{main_struct_theorem}
Let $\mathcal{A}$ be the randomized algorithm using unstructured Gaussian matrices $\textbf{G}$ and let $s,d$ be as at the beginning of the section.
Replace the unstructured matrix $\textbf{G}$ by one of 
its structured variants from the \textit{TripleSpin}-family defined in Section \ref{sec:model} with blocks of $m$ rows each. 
Then for $n$ large enough and $\epsilon = o_{md}(1)$ with probability at least:

\begin{equation}
1 - 2p(n)sd - 2{md \choose 2}se^{-\Omega(\min(\frac{\epsilon^{2}n^{2}}{K^{4}\Lambda_{F}^{2}\delta^{4}(n)}, \frac{\epsilon n}{K^{2}\Lambda_{2} \delta^{2}(n)}))}
\end{equation}
the structured version of the algorithm belongs to the class $\mathcal{A}_{\eta, \epsilon}$, where: $\eta = \frac{\delta^{3}(n)}{n^{\frac{2}{5}}}$, $\delta(n), p(n),K, \Lambda_{F}, \Lambda_{2}$ are as in the definition of the \textit{TripleSpin}-family from Section \ref{sec:model}
and the probability is in respect to the random choices of $\textbf{M}_{1}$
and $\textbf{M}_{2}$.
\end{theorem}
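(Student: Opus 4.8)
The plan is to fix one function $f$ of $\mathcal{A}$ with linear space $\mathcal{L}_{f}$ of dimension $l\le d$ and fixed orthonormal basis $\textbf{x}^{1},\ldots,\textbf{x}^{l}$, to analyze the stacked vector $\textbf{q}^{\prime}_{f}=((\textbf{G}_{struct}\textbf{x}^{1})^{T},\ldots,(\textbf{G}_{struct}\textbf{x}^{l})^{T})^{T}\in\mathbb{R}^{ml}$ for a single $m\times n$ block $\textbf{G}_{struct}=\textbf{M}_{3}\textbf{M}_{2}\textbf{M}_{1}$, and then to take a union bound over the $s$ functions and their $\le d$ basis vectors each. The key observation is that once we \emph{condition on} $\textbf{M}_{1}$ and on the sub-Gaussian variables $\rho_{1},\ldots,\rho_{n}$ entering $\textbf{M}_{2}$ (Condition~2), the only remaining randomness is $\textbf{r}$, and by the circulant form $\textbf{M}_{3}=\textbf{C}(\textbf{r},n)$ (Condition~3) the $t$-th coordinate of $\textbf{G}_{struct}\textbf{x}^{a}$ equals $\textbf{r}^{T}$ applied to the $t$-th length-$k$ block of $\textbf{M}_{2}\textbf{M}_{1}\textbf{x}^{a}$, namely $\textbf{r}^{T}\textbf{u}^{a,t}$ with $\textbf{u}^{a,t}:=\textbf{W}^{t}\textbf{D}_{\rho}\textbf{M}_{1}\textbf{x}^{a}$. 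Hence, conditionally, $\textbf{q}^{\prime}_{f}=\textbf{U}\textbf{r}=\sum_{c=1}^{k}r_{c}\textbf{u}_{c}$, where the row of $\textbf{U}\in\mathbb{R}^{ml\times k}$ indexed by $(a,t)$ is $\textbf{u}^{a,t}$ and $\textbf{u}_{c}$ denotes the $c$-th column of $\textbf{U}$; in particular $\textbf{q}^{\prime}_{f}$ is a linear image of the base vector $\textbf{r}$ and a sum of independent pieces.

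The first main step is to show that, with high probability over $\textbf{M}_{1}$ and the $\rho_{i}$'s, the conditional covariance $\Sigma^{\prime}:=\textbf{U}\textbf{U}^{T}$ is entrywise $\epsilon$-close to $\textbf{I}_{ml}$. Writing $\textbf{y}^{a}=\textbf{M}_{1}\textbf{x}^{a}$, each entry is a quadratic form $\Sigma^{\prime}_{(a,t),(b,t')}=\rho^{T}\textbf{B}\rho$ with $\textbf{B}=\mathrm{diag}(\textbf{y}^{a})\,(\textbf{W}^{t})^{T}\textbf{W}^{t'}\,\mathrm{diag}(\textbf{y}^{b})$, whose mean over $\rho$ is $\mathrm{tr}(\textbf{B})$. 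The two orthogonality properties in the definition of a $(\Lambda_{F},\Lambda_{2})$-smooth set kill the $t\neq t'$ contributions, while for $t=t'$ the equal-column-norm property together with the isometry of $\textbf{M}_{1}$ gives $\|\textbf{W}^{t}_{1}\|_{2}^{2}\,(\textbf{y}^{a})^{T}\textbf{y}^{b}=\delta_{ab}$ (the constructions are normalised so $\|\textbf{W}^{t}_{1}\|_{2}^{2}=1$); thus $\mathbb{E}_{\rho}[\Sigma^{\prime}]=\textbf{I}_{ml}$. Using the $(\delta(n),p(n))$-balancedness of $\textbf{M}_{1}$ to get $\|\textbf{y}^{a}\|_{\infty}\le\delta(n)/\sqrt{n}$, and the $\Lambda_{F},\Lambda_{2}$ bounds on $\|(\textbf{W}^{t})^{T}\textbf{W}^{t'}\|$, one obtains $\|\textbf{B}\|_{F}\le\Lambda_{F}\delta^{2}(n)/n$ and $\|\textbf{B}\|_{2}\le\Lambda_{2}\delta^{2}(n)/n$, so the Hanson--Wright inequality yields
\[
\mathbb{P}_{\rho}\!\left[\,\bigl|\Sigma^{\prime}_{(a,t),(b,t')}-\delta_{ab}\delta_{tt'}\bigr|>\epsilon\,\right]\ \le\ 2\,e^{-\Omega\left(\min\left(\frac{\epsilon^{2}n^{2}}{K^{4}\Lambda_{F}^{2}\delta^{4}(n)},\,\frac{\epsilon n}{K^{2}\Lambda_{2}\delta^{2}(n)}\right)\right)}.
\]
Union-bounding this over the $\le{md\choose 2}$ entries of $\Sigma^{\prime}$ and the $s$ functions, and adding the $2p(n)sd$ cost of all the balancedness events for $\textbf{M}_{1}$ and $\textbf{M}_{2}\textbf{M}_{1}$, reproduces the failure probability in the theorem.

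On the resulting good event $\Sigma^{\prime}$ has all eigenvalues in $[\tfrac12,\tfrac32]$ (Gershgorin, since $\epsilon=o_{md}(1)$). If $\textbf{r}$ is Gaussian then $\textbf{q}^{\prime}_{f}\sim\mathcal{N}(0,\Sigma^{\prime})$ exactly. If $\textbf{r}$ is Rademacher then $\textbf{q}^{\prime}_{f}=\sum_{c}r_{c}\textbf{u}_{c}$ is a sum of $k$ independent, mean-zero, bounded vectors in $\mathbb{R}^{ml}$ with covariance $\Sigma^{\prime}$, and we invoke a Berry--Esseen-type central limit theorem for sums of random vectors over the class of convex sets (Bentkus-type): it bounds $\sup_{S\ \mathrm{convex}}|\mathbb{P}[\textbf{q}^{\prime}_{f}\in S]-\mathbb{P}[\mathcal{N}(0,\Sigma^{\prime})\in S]|$ by $C(ml)^{1/4}\lambda_{\min}(\Sigma^{\prime})^{-3/2}\sum_{c}\|\textbf{u}_{c}\|_{2}^{3}$. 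Here $\sum_{c}\|\textbf{u}_{c}\|_{2}^{2}=\mathrm{tr}(\Sigma^{\prime})=\Theta(ml)$, while the $(\delta(n),p(n))$-balancedness of $\textbf{M}_{2}\textbf{M}_{1}$ forces $|(\textbf{u}^{a,t})_{c}|\le\delta(n)/\sqrt{n}$, hence $\max_{c}\|\textbf{u}_{c}\|_{2}\le\sqrt{ml}\,\delta(n)/\sqrt{n}$ and $\sum_{c}\|\textbf{u}_{c}\|_{2}^{3}=O\!\left((ml)^{3/2}\delta(n)/\sqrt{n}\right)$; for $n$ large (relative to $md$) this error, together with the total-variation cost of rescaling each coordinate of $\textbf{q}^{\prime}_{f}$ to unit variance so as to pass from $\mathcal{N}(0,\Sigma^{\prime})$ to the $\epsilon$-similar Gaussian $\textbf{q}_{f}(\epsilon):=\mathcal{N}(0,\mathrm{corr}(\Sigma^{\prime}))$ (negligible once $\epsilon$ is chosen small enough), is at most $\eta=\delta^{3}(n)/n^{2/5}$. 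Since convex sets are preserved under the linear map $(\Sigma^{\prime})^{-1/2}$, this gives $|\mathbb{P}[\textbf{q}_{f}(\epsilon)\in S]-\mathbb{P}[\textbf{q}^{\prime}_{f}\in S]|\le\eta$ for every convex $S\subseteq\mathbb{R}^{ml}$, i.e. the structured algorithm lies in $\mathcal{A}_{\eta,\epsilon}$.

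I expect the main obstacle to be the covariance-concentration step and its interface with the CLT. Extracting exactly the stated Hanson--Wright exponents forces one to use all three Conditions in concert: the isometry and balancedness of $\textbf{M}_{1}$ fix the mean of $\Sigma^{\prime}$ at the identity and bound $\|\textbf{y}^{a}\|_{\infty}$; the two orthogonality relations and the $\Lambda_{F},\Lambda_{2}$ bounds of the smooth set control $\|\textbf{B}\|_{F}$ and $\|\textbf{B}\|_{2}$; and the $\pm1$/Gaussian structure of $\textbf{r}$ is what makes $\textbf{q}^{\prime}_{f}$ a linear image amenable to a CLT. The second hard point is quoting the convex-set Berry--Esseen bound in a form whose error is genuinely $o(1)$ — and in fact $\le\delta^{3}(n)/n^{2/5}$ — which again relies on the balancedness of $\textbf{M}_{2}\textbf{M}_{1}$ to keep the third absolute moments $\sum_{c}\|\textbf{u}_{c}\|_{2}^{3}$ small. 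The passage between $\Sigma^{\prime}$ and its correlation matrix, and the bookkeeping of the union bound across functions, blocks and coordinate pairs, are routine but must be tracked so that $\eta$ and $\epsilon$ emerge exactly as stated.
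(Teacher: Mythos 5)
Your proposal is correct and follows essentially the same route as the paper's proof: condition on $\textbf{M}_{1}$ and the $\rho_{i}$'s, apply the Hanson--Wright inequality to the quadratic forms $\rho^{T}\mathrm{diag}(\textbf{y}^{a})(\textbf{W}^{t})^{T}\textbf{W}^{t'}\mathrm{diag}(\textbf{y}^{b})\rho$ (with exactly the same $\Lambda_{F},\Lambda_{2},\delta(n)$-based norm bounds and the same union bound) to get the near-orthogonality/$\epsilon$-closeness of the conditional covariance, and then invoke Bentkus' convex-set Berry--Esseen theorem for the sum $\sum_{c}r_{c}\textbf{u}_{c}$, treating the Gaussian-$\textbf{r}$ case as exact. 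The only deviation is your third-moment estimate $\sum_{c}\|\textbf{u}_{c}\|_{2}^{3}\leq\max_{c}\|\textbf{u}_{c}\|_{2}\cdot\mathrm{tr}(\Sigma')$, which is in fact slightly sharper (one power of $\delta(n)$ rather than three) than the bound the paper uses, and only strengthens the conclusion.
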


Theorem \ref{main_struct_theorem} implies strong accuracy guarantees for the specific matrices from the $TripleSpin$-family. As a corollary we get for instance:

\begin{theorem}
\label{corollary_theorem}
Under assumptions from Theorem~\ref{main_struct_theorem}
the probability that the structured version of the algorithm belongs to $\mathcal{A}_{\eta, \epsilon}$ for $\eta = \frac{\delta^{3}(n)}{n^{\frac{2}{5}}}$ is at least:
$1 - 4ne^{-\frac{\log^{2}(n)}{8}}sd - 2{md \choose 2}s e^{-\Omega(\frac{\epsilon^{2}n}{\log^{4}(n)})}$
for the structured matrices $\sqrt{n}\textbf{HD}_{3}\textbf{HD}_{2}\textbf{HD}_{1}$, $\sqrt{n}\textbf{HD}_{g_{1},...,g_{n}}\textbf{HD}_{2}\textbf{HD}_{1}$ as well as for the structured matrices of the form $\textbf{G}_{struct}\textbf{D}_{2}\textbf{HD}_{1}$, where $\textbf{G}_{struct}$ is Gaussian circulant, Gaussian Toeplitz or Gaussian Hankel matrix.
\end{theorem}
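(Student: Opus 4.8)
The plan is to obtain Theorem~\ref{corollary_theorem} as a direct specialization of Theorem~\ref{main_struct_theorem}, feeding in the concrete parameter values supplied by Lemma~\ref{simple_lemma} for the three named constructions. First I would invoke Lemma~\ref{simple_lemma}, which asserts that $\sqrt{n}\textbf{HD}_{3}\textbf{HD}_{2}\textbf{HD}_{1}$, $\sqrt{n}\textbf{HD}_{g_{1},\ldots,g_{n}}\textbf{HD}_{2}\textbf{HD}_{1}$ and $\textbf{G}_{struct}\textbf{D}_{2}\textbf{HD}_{1}$ (with $\textbf{G}_{struct}$ Gaussian circulant, Toeplitz or Hankel) are all valid \textit{TripleSpin}-matrices with $\delta(n)=\log(n)$, $p(n)=2ne^{-\log^{2}(n)/8}$, $K=1$, $\Lambda_{F}=O(\sqrt{n})$ and $\Lambda_{2}=O(1)$. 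Since these matrices satisfy the hypotheses of Theorem~\ref{main_struct_theorem}, the structured algorithm lies in $\mathcal{A}_{\eta,\epsilon}$ with $\eta=\delta^{3}(n)/n^{2/5}=\log^{3}(n)/n^{2/5}$ with probability at least the general bound stated there, so it remains only to evaluate that bound at these parameters.

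Next I would substitute the values into the two subtracted error terms. The first term $2p(n)sd$ becomes $2\cdot 2ne^{-\log^{2}(n)/8}\cdot sd = 4ne^{-\log^{2}(n)/8}sd$, matching the claimed expression verbatim. For the exponent inside the second term I would simplify the minimum: with $K=1$, $\Lambda_{F}^{2}=O(n)$ and $\delta^{4}(n)=\log^{4}(n)$ the first argument equals $\frac{\epsilon^{2}n^{2}}{K^{4}\Lambda_{F}^{2}\delta^{4}(n)}=\Omega(\frac{\epsilon^{2}n}{\log^{4}(n)})$, while with $\Lambda_{2}=O(1)$ and $\delta^{2}(n)=\log^{2}(n)$ the second argument equals $\frac{\epsilon n}{K^{2}\Lambda_{2}\delta^{2}(n)}=\Omega(\frac{\epsilon n}{\log^{2}(n)})$; the $O(\cdot)$ constants here are absolute and get absorbed into the $\Omega(\cdot)$. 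To see that the minimum is realized by the first quantity, I would use the standing hypothesis $\epsilon=o_{md}(1)$, in particular $\epsilon=o(1)$, so that $\epsilon/\log^{2}(n)<1$ for $n$ large; then $\frac{\epsilon^{2}n}{\log^{4}(n)}=\frac{\epsilon}{\log^{2}(n)}\cdot\frac{\epsilon n}{\log^{2}(n)}\le\frac{\epsilon n}{\log^{2}(n)}$, so $e^{-\Omega(\min(\cdot,\cdot))}=e^{-\Omega(\epsilon^{2}n/\log^{4}(n))}$. Multiplying by $2{md \choose 2}s$ and combining with the first term yields exactly the bound claimed in the corollary.

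I expect no genuine obstacle: the mathematical content is entirely carried by Lemma~\ref{simple_lemma} and Theorem~\ref{main_struct_theorem}, and the remaining work is the bookkeeping of plugging in parameters and resolving the inner $\min$ under $\epsilon=o(1)$. The only thing to watch is keeping the hidden constants in $\Lambda_{F}=O(\sqrt{n})$ and $\Lambda_{2}=O(1)$ consistently inside the $\Omega(\cdot)$, which is legitimate precisely because they do not depend on $n$, $\epsilon$, $m$ or $d$.
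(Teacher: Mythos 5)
Your proposal is correct and follows exactly the paper's route: the paper proves Theorem~\ref{corollary_theorem} in one line by citing Theorem~\ref{main_struct_theorem} together with Lemma~\ref{simple_lemma}, and your argument is just that same specialization with the parameter substitution ($p(n)=2ne^{-\log^{2}(n)/8}$, $K=1$, $\Lambda_{F}=O(\sqrt{n})$, $\Lambda_{2}=O(1)$, $\delta(n)=\log(n)$) and the resolution of the inner $\min$ under $\epsilon=o(1)$ written out explicitly. No gaps; your bookkeeping simply makes the paper's implicit computation visible.
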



As a corollary of Theorem~\ref{corollary_theorem}, we obtain the following result showing the effectiveness of the cross-polytope LSH with structured matrices $\textbf{HD}_{3}\textbf{HD}_{2}\textbf{HD}_{1}$ that was only heuristically confirmed before~\cite{indyk2015}.

\begin{theorem}
\label{hopefully_last_theorem}
Let $\textbf{x},\textbf{y} \in \mathbb{R}^{n}$ be two unit $L_{2}$-norm vectors. 
Let $\textbf{v}_{\textbf{x}, \textbf{y}}$ be the vector indexed by all $m^{2}$ ordered pairs of canonical directions $(\pm \textbf{e}_{i},\pm \textbf{e}_{j})$, where the value of the entry indexed by $(\textbf{u},\textbf{w})$ is the probability that: $h(\textbf{x}) = \textbf{u}$ and $h(\textbf{y}) = \textbf{w}$, and $h(\textbf{v})$ stands for the hash of $\textbf{v}$. Then with probability at least: 
$p_{success} =  1 - 4ne^{-\frac{\log^{2}(n)}{8}}sd - 2{md \choose 2}s e^{-\Omega(\frac{\epsilon^{2}n}{\log^{4}(n)})}$
the version of the stochastic vector $\textbf{v}^{1}_{\textbf{x},\textbf{y}}$
for the unstructured Gaussian matrix $\textbf{G}$ and its structured counterpart $\textbf{v}^{2}_{\textbf{x},\textbf{y}}$ for the matrix $\textbf{HD}_{3}\textbf{HD}_{2}\textbf{HD}_{1}$ satisfy: 
$\|\textbf{v}^{1}_{\textbf{x},\textbf{y}} - \textbf{v}^{2}_{\textbf{x},\textbf{y}}\|_{\infty} \leq \log^{3}(n)/n^{\frac{2}{5}} + c \epsilon$,
for $n$ large enough, where $c > 0$ is a universal constant.
The probability above is taken with respect to random choices of $\textbf{D}_{1}$ and $\textbf{D}_{2}$.
\end{theorem}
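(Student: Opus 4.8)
The plan is to reduce the statement to Theorem~\ref{corollary_theorem} applied to the single pair $(\textbf{x},\textbf{y})$, and then to convert the resulting distributional closeness of the projection vectors into an $\ell_\infty$ bound on the collision-probability vectors $\textbf{v}_{\textbf{x},\textbf{y}}$. First I would record that the cross-polytope hash is scale-invariant: $h(\textbf{v})$ depends only on the direction $\textbf{G}\textbf{v}/\|\textbf{G}\textbf{v}\|_2$, so the normalization factor $\sqrt{n}$ in $\sqrt{n}\,\textbf{HD}_3\textbf{HD}_2\textbf{HD}_1$ (the form used in Lemma~\ref{simple_lemma} and Theorem~\ref{corollary_theorem}) is irrelevant. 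Hence I may take $f=f_{\textbf{x},\textbf{y}}$ acting on $\mathcal{L}_f=\mathrm{span}(\textbf{x},\textbf{y})$, so $l=\dim\mathcal{L}_f\le 2$ and $d=2$, while $m$ is the number of rows kept from the square matrix; then $\textbf{q}_f$ is an $ml\le 2m$-dimensional Gaussian vector and $\textbf{q}'_f$ its structured counterpart. Theorem~\ref{corollary_theorem} gives, with probability at least $p_{success}$ over $\textbf{D}_1,\textbf{D}_2$, that the structured scheme lies in $\mathcal{A}_{\eta,\epsilon}$ with $\eta=\delta^3(n)/n^{2/5}=\log^3(n)/n^{2/5}$; i.e.\ there is a Gaussian $\textbf{q}_f(\epsilon)$ that is $\epsilon$-similar to $\textbf{q}_f$ (unit diagonal, off-diagonal covariances of absolute value at most $\epsilon$) with
\begin{equation}
\bigl|\,\mathbb{P}[\textbf{q}_f(\epsilon)\in S]-\mathbb{P}[\textbf{q}'_f\in S]\,\bigr|\le\eta\qquad\text{for every convex }S\subseteq\mathbb{R}^{ml}.
\end{equation}

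Next I would realize each coordinate of $\textbf{v}_{\textbf{x},\textbf{y}}$ as the probability of a convex event in $\textbf{q}_f$. Since $\eta(\textbf{z})=\mathrm{sign}(z_{i^\ast})\textbf{e}_{i^\ast}$ with $i^\ast=\arg\max_j|z_j|$, the event $\{h(\textbf{x})=\pm\textbf{e}_i\}$ equals $\{\pm(\textbf{G}\textbf{x})_i\ge(\textbf{G}\textbf{x})_j\ \text{and}\ \pm(\textbf{G}\textbf{x})_i\ge-(\textbf{G}\textbf{x})_j\ \text{for all }j\}$, an intersection of halfspaces, hence a convex polyhedral cone (ties form a measure-zero set and can be ignored for the continuous laws involved); the event indexed by an ordered pair $(\textbf{u},\textbf{w})$ is $\{h(\textbf{x})=\textbf{u}\}\cap\{h(\textbf{y})=\textbf{w}\}$, still convex. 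Writing $\textbf{x},\textbf{y}$ in an orthonormal basis $\textbf{x}^1,\textbf{x}^2$ of $\mathcal{L}_f$ makes $\textbf{G}\textbf{x}$ and $\textbf{G}\textbf{y}$ linear images of $\textbf{q}_f$, so these events pull back to convex sets $S_{\textbf{u},\textbf{w}}\subseteq\mathbb{R}^{ml}$, and therefore $(\textbf{v}^1_{\textbf{x},\textbf{y}})_{(\textbf{u},\textbf{w})}=\mathbb{P}[\textbf{q}_f\in S_{\textbf{u},\textbf{w}}]$ and $(\textbf{v}^2_{\textbf{x},\textbf{y}})_{(\textbf{u},\textbf{w})}=\mathbb{P}[\textbf{q}'_f\in S_{\textbf{u},\textbf{w}}]$.

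It then remains to insert $\textbf{q}_f(\epsilon)$ as an intermediary. For each pair $(\textbf{u},\textbf{w})$,
\begin{equation}
\bigl|(\textbf{v}^1_{\textbf{x},\textbf{y}})_{(\textbf{u},\textbf{w})}-(\textbf{v}^2_{\textbf{x},\textbf{y}})_{(\textbf{u},\textbf{w})}\bigr|\le\bigl|\mathbb{P}[\textbf{q}_f\in S_{\textbf{u},\textbf{w}}]-\mathbb{P}[\textbf{q}_f(\epsilon)\in S_{\textbf{u},\textbf{w}}]\bigr|+\bigl|\mathbb{P}[\textbf{q}_f(\epsilon)\in S_{\textbf{u},\textbf{w}}]-\mathbb{P}[\textbf{q}'_f\in S_{\textbf{u},\textbf{w}}]\bigr|.
\end{equation}
The second term is $\le\eta$ by the display in the first paragraph. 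For the first term I would invoke a Gaussian comparison estimate: two centered Gaussians on $\mathbb{R}^{ml}$ with identical unit variances whose correlation matrices differ entrywise by at most $\epsilon$ satisfy $\sup_{S\ \text{convex}}\bigl|\mathbb{P}[N(0,\Sigma_1)\in S]-\mathbb{P}[N(0,\Sigma_2)\in S]\bigr|\le c\,\epsilon$, proved e.g.\ by Pinsker's inequality together with the closed-form KL divergence between Gaussians, or by a Gaussian interpolation (Slepian/Stein-type) argument; here $\epsilon=o_{md}(1)$ guarantees $\textbf{q}_f(\epsilon)$ has a positive-definite covariance close to $\textbf{I}$, so the estimate applies. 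Taking the maximum over the $m^2$ pairs $(\textbf{u},\textbf{w})$ yields $\|\textbf{v}^1_{\textbf{x},\textbf{y}}-\textbf{v}^2_{\textbf{x},\textbf{y}}\|_\infty\le\eta+c\epsilon=\log^3(n)/n^{2/5}+c\epsilon$, which is the claim.

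The main obstacle is this last comparison: $\textbf{q}_f(\epsilon)$ is only guaranteed to be \emph{some} $\epsilon$-similar Gaussian, not one we may choose, so the bound must be uniform over all correlation matrices within entrywise distance $\epsilon$ of $\textbf{I}$, and one must check that the resulting constant can indeed be taken universal for the dimensions in play (the naive KL route produces a factor depending on $ml$, which here is controlled since $l\le2$; tightening this, or treating $m$ as a fixed structuredness parameter, is what makes $c$ absolute). The remaining ingredients --- scale-invariance of $h$, convexity of the cross-polytope decision cells, and the degenerate case $\textbf{y}=\pm\textbf{x}$ with $l=1$ --- are routine.
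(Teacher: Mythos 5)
Your proposal follows essentially the same route as the paper's proof: invoke Theorem~\ref{corollary_theorem} for the single function attached to $\mathrm{span}(\textbf{x},\textbf{y})$, observe that each cross-polytope collision event pulls back to a convex set in $\mathbb{R}^{2m}$ (the paper writes it as $Q(\textbf{e}_i)\cap Q(\textbf{e}_j)$), and apply the triangle inequality through the $\epsilon$-similar Gaussian $\textbf{q}_f(\epsilon)$, bounding that last comparison by $O(\epsilon)$. The only divergence is in this final step, where the paper expands the Gaussian density directly via $(\textbf{I}+\textbf{E})^{-1}=\textbf{I}-\textbf{E}+\textbf{E}^2-\ldots$ and $\det(\textbf{I}+\textbf{E})$ rather than your Pinsker/KL or interpolation argument; both variants carry the same (acknowledged) subtlety about the dimension-dependence of the constant $c$, so your account matches the paper's argument in substance.
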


The proof for the discrete structured setting applies Berry-Esseen-type results for random vectors (details in the Appendix) showing that for $n$ large enough $\pm 1$ random vectors $\textbf{r}$ act similarly to Gaussian vectors.

\section{Experiments}
\label{sec:experiments}

Experiments have been carried out on a single processor machine (Intel Core i7-5600U CPU @ 2.60GHz, 4 hyper-threads) with 16GB RAM for the first two applications and a dual processor machine (Intel Xeon E5-2640 v3 @ 2.60GHz, 32 hyper-threads) with 128GB RAM for the last one. Every experiment was conducted using Python. In particular, NumPy is linked against a highly optimized BLAS library (Intel MKL). Fast Fourier Transform is performed using numpy.fft and Fast Hadamard Transform is using ffht from~\cite{indyk2015}. To have fair comparison, we have set up: $\mathrm{OMP\_NUM\_THREADS}=1$ so that every experiment is done on a single thread. Every parameter corresponding to a \textit{TripleSpin}-matrix is computed in advance, such that obtained speedups take only matrix-vector products into account. 
All figures should be viewed in color.

\subsection{Locality-Sensitive Hashing (LSH) application} 

In the first experiment, to support our theoretical results for the \textit{TripleSpin}-matrices in the cross-polytope LSH, we compared collision probabilities in Figure \ref{fig:collision} for the low dimensional case. Results are shown for one hash function (averaged over $100$ runs). For each interval, collision probability has been computed for $20 000$ points for a random Gaussian matrix $\textbf{G}$ and four other types of \textit{TripleSpin}-matrices (descending order of number of parameters): $\textbf{G}_{Toeplitz}\textbf{D}_{2}\textbf{HD}_{1}$, $\textbf{G}_{skew-circ}\textbf{D}_{2}\textbf{HD}_{1}$, $\textbf{HD}_{g_{1},...,g_{n}}\textbf{HD}_{2}\textbf{HD}_{1}$, and $\textbf{HD}_{3}\textbf{HD}_{2}\textbf{HD}_{1}$, where $\textbf{G}_{Toeplitz}$, and $\textbf{G}_{skew-circ}$ are respectively Gaussian Toeplitz and Gaussian skew-circulant matrices. 

We can see that all \textit{TripleSpin}-matrices show high collision probabilities for small distances and low ones for large distances. All the curves are almost identical. As theoretically predicted, there is no loss of accuracy (sensitivity) by using matrices from the \textit{TripleSpin}-family.

\begin{figure}[!ht]
\CenterFloatBoxes
\begin{floatrow}
\ffigbox
  {\includegraphics[scale = 0.16]{./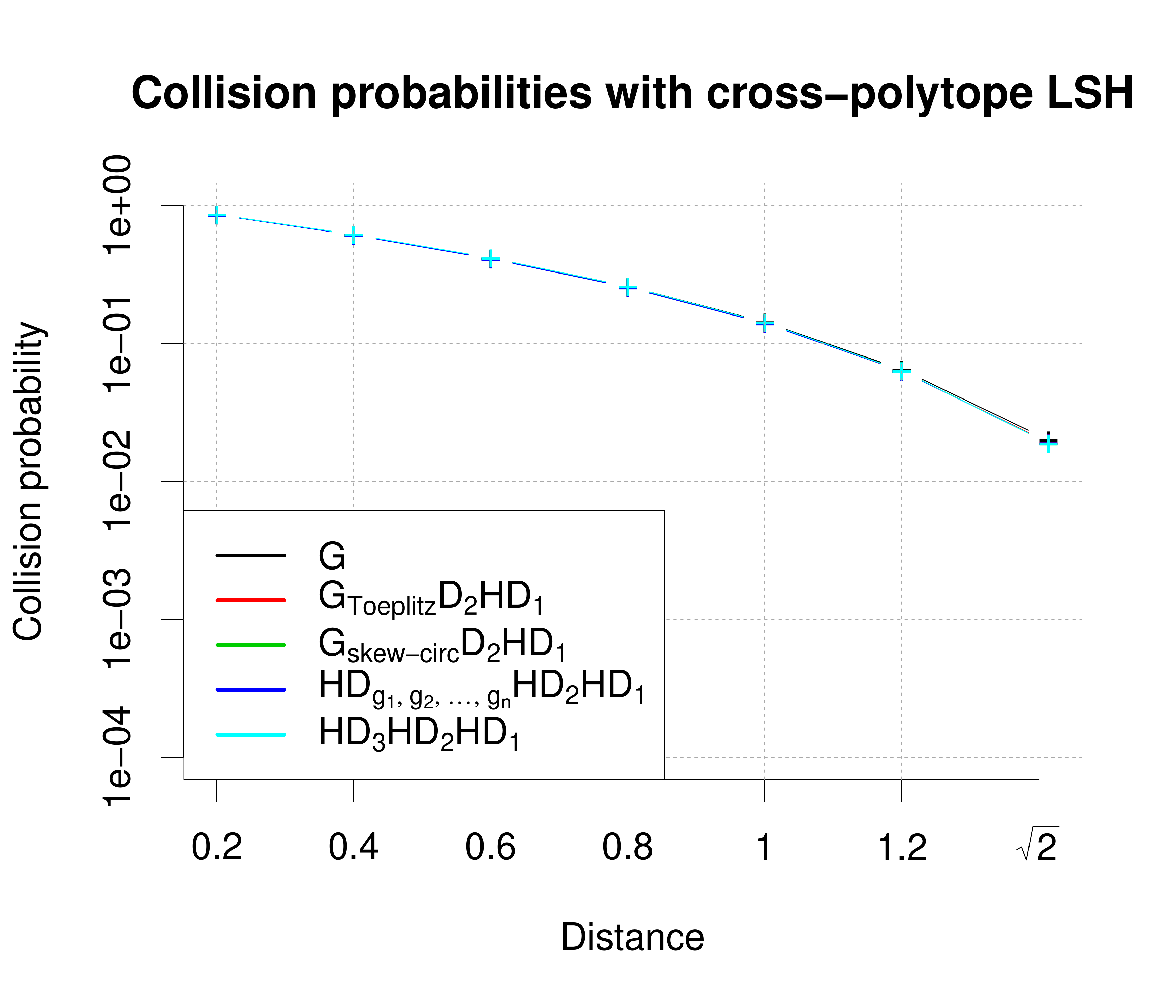}}
  {\caption{Cross-polytope LSH - collision  probabilities}\label{fig:collision}}
\killfloatstyle
\vspace{0pt}
\ttabbox
  {\renewcommand{\arraystretch}{1.5} 
  \resizebox{1.0\linewidth}{!}{%
  \begin{tabular}{|l|l|l|l|l|l|l|l|}
\hline
Matrix dimensions                                             & $2^{9}$ & $2^{10}$ & $2^{11}$ & $2^{12}$ & $2^{13}$ & $2^{14}$ & $2^{15}$ \\ \hline 

$\textbf{G}_{Toeplitz}\textbf{D}_{2}\textbf{HD}_{1}$             & x1.4      & x3.4       & x6.4       & x12.9       & x28.0       & x42.3       & x89.6       \\ \hline

$\textbf{G}_{skew-circ}\textbf{D}_{2}\textbf{HD}_{1}$         & x1.5      & x3.6       & x6.8       & x14.9       & x31.2       & x49.7       & x96.5       \\ \hline

$\textbf{HD}_{g_{1},...,g_{n}}\textbf{HD}_{2}\textbf{HD}_{1}$ & x2.3      & x6.0       & x13.8       & x31.5       & x75.7       & x137.0      & x308.8       \\ \hline

$\textbf{HD}_{3}\textbf{HD}_{2}\textbf{HD}_{1}$               & x2.2      & x6.0       & x14.1       & x33.3       & x74.3       & x140.4       & x316.8       \\ \hline
\end{tabular}
}%
  }
  {\vspace{0.25cm}
\caption{Speedups for Gaussian kernel approximation}
\label{tab:speedupskernel}
\vspace{-0.25cm}}
\end{floatrow}
\end{figure}






\subsection{Kernel approximation}

In the second experiment, we compared feature maps obtained with Gaussian random matrices and specific \textit{TripleSpin}-matrices for Gaussian and angular kernels. To test the quality of the structured kernels' approximations, we compute the corresponding Gram-matrix reconstruction error using the Frobenius norm metric as in \cite{chor_sind_2016} : $ \frac{||\textbf{K} - \tilde{\textbf{K}} ||_{F}}{||\textbf{K}||_{F}} $, where $\textbf{K}, \tilde{\textbf{K}}$ are respectively the exact and approximate Gram-matrices, as a function of the number of random features. When number of random features $k$ is greater than data dimensionality $n$, we apply described block-mechanism.
We used the USPST dataset (test set) which consists of scans of handwritten digits from envelopes by the U.S. Postal Service. It contains 2007 points of dimensionality 258 ($n = 258$) corresponding to descriptors of 16 x 16 grayscale images. For Gaussian kernel, bandwidth $\sigma$ is set to $9.4338$. The results are averaged over $10$ runs. 

\paragraph{Results on the USPST dataset:} The following matrices have been tested: Gaussian random matrix $\textbf{G}$,  $\textbf{G}_{Toeplitz}\textbf{D}_{2}\textbf{HD}_{1}$ $\textbf{G}_{skew-circ}\textbf{D}_{2}\textbf{HD}_{1}$, $\textbf{HD}_{g_{1},...,g_{n}}\textbf{HD}_{2}\textbf{HD}_{1}$ and $\textbf{HD}_{3}\textbf{HD}_{2}\textbf{HD}_{1}$.

In Figure \ref{kernel_approx_USPST}, for both kernels, all \textit{TripleSpin}-matrices perform similarly to a random Gaussian matrix, but $\textbf{HD}_{3}\textbf{HD}_{2}\textbf{HD}_{1}$ is giving the best results (see Figure \ref{kernel_approx} in the Appendix for additional experiments). The efficiency of the \textit{TripleSpin}-mechanism does not depend on the dataset.

\begin{figure}[!t]
\centering
\includegraphics[scale = 0.19]{./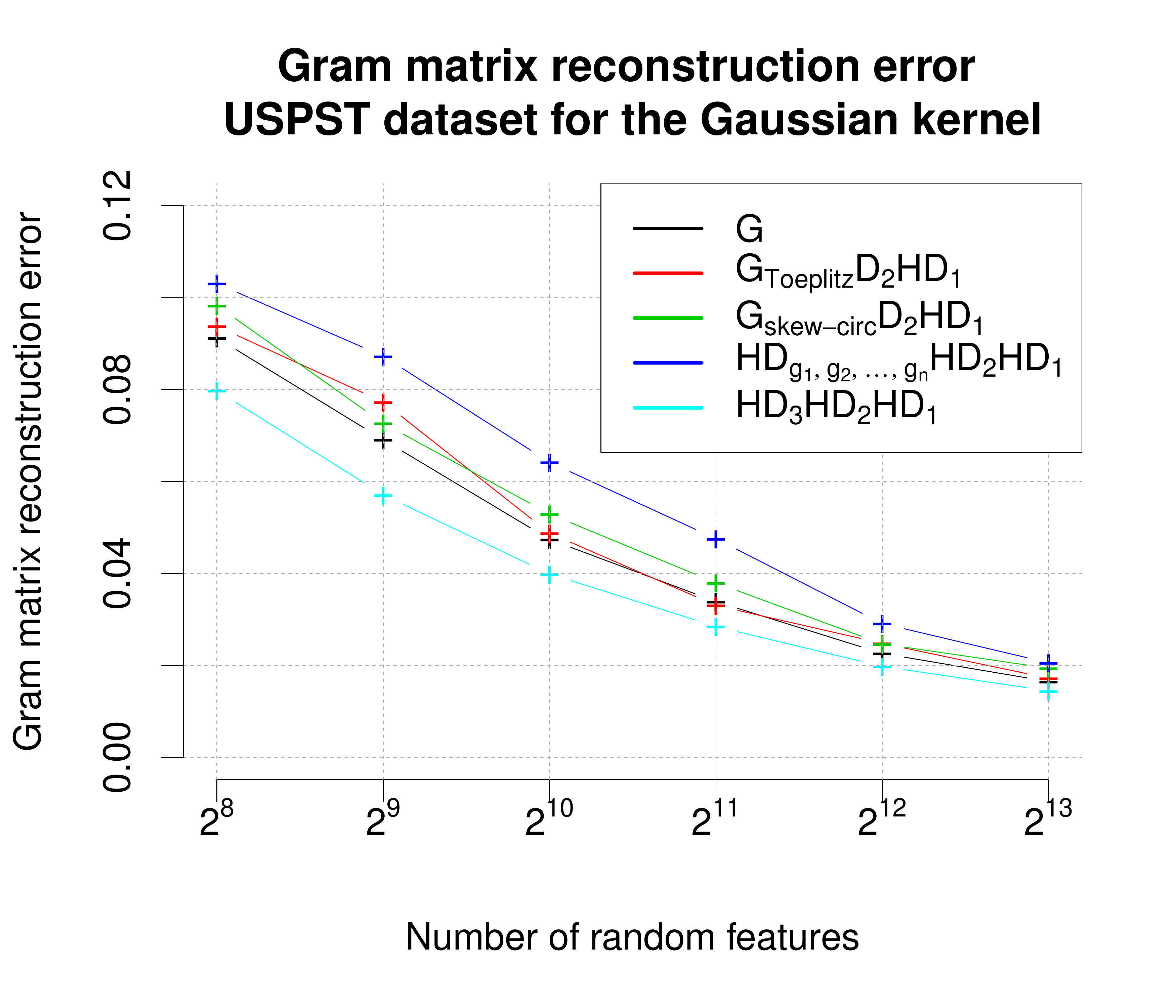}
\includegraphics[scale = 0.19]{./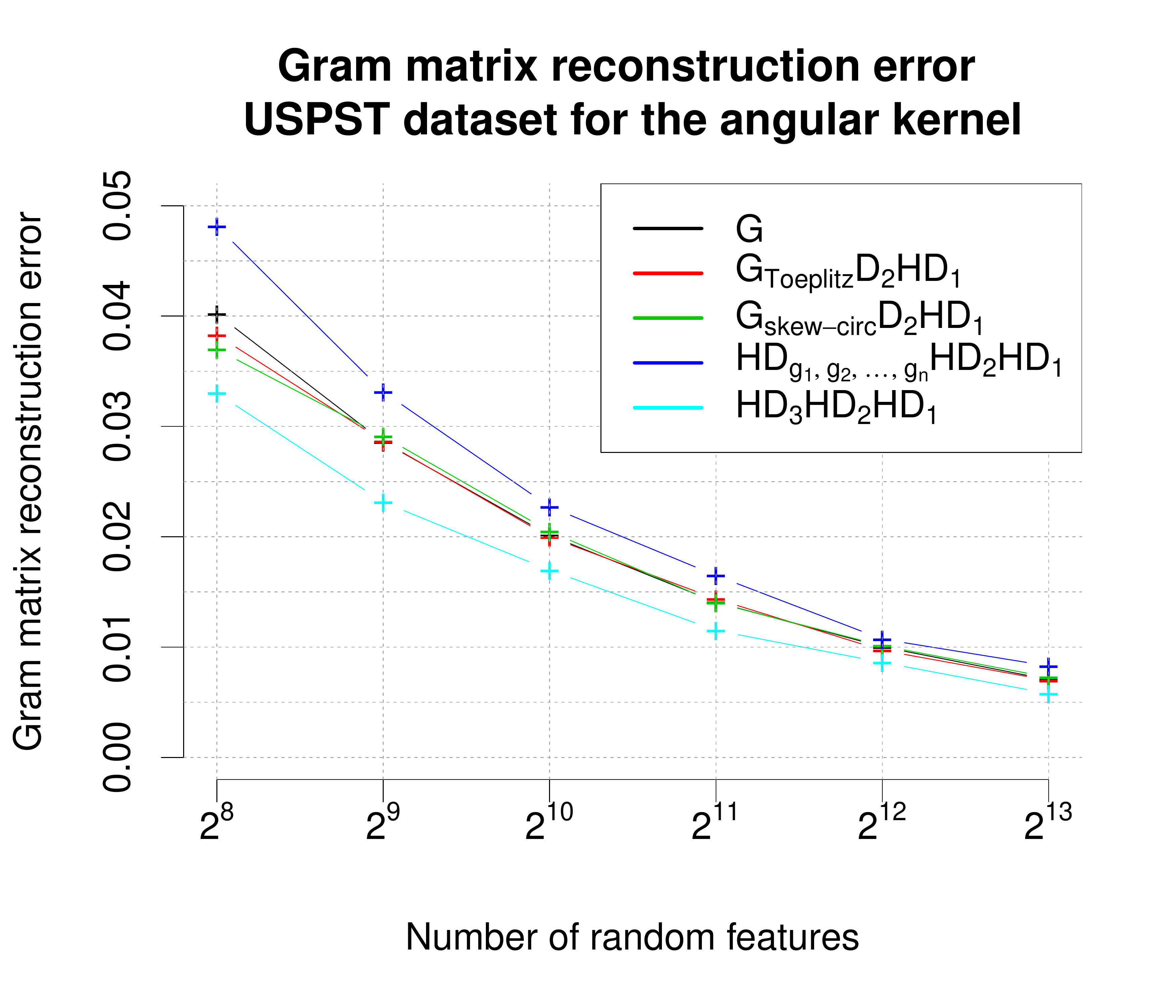}
\caption{Accuracy of random feature map kernel approximation}
\label{kernel_approx_USPST}
\end{figure}

Table \ref{tab:speedupskernel} shows significant speedups
obtained by the \textit{TripleSpin}-matrices. Those are defined as $\mbox{time}(\textbf{G})/\mbox{time}(\textbf{T})$, where $\textbf{G}$ is a random Gaussian matrix, $\textbf{T}$ is a \textit{TripleSpin} matrix
and $\mbox{time}(\textbf{X})$ stands for the corresponding runtime.

\subsection{Newton sketches}
Our last experiment covers the Newton sketch approach initially proposed in~\cite{pilanci} as 
a generic optimization framework. In the subsequent experiment we show  that {\em TripleSpin} matrices can be used for this purpose, thus can speed up several convex optimization problems solvers. The logistic regression problem is considered (see the Appendix for more details). Our goal is to find $x \in \mathbb{R}^d$, which minimizes the logistic regression cost, given a dataset $\{(a_i,y_i)\}_{i=1..n}$, with $a_i \in 
\mathbb{R}^d$ sampled according to a Gaussian centered multivariate distribution with covariance $\Sigma_{i,j} = 0.99^{|i-j|}$ and $y_i \in \{-1,1\}$, generated at random. Various sketching matrices $S^t \in \mathbb{R}^{m \times n}$ are considered. 

We first show that equivalent convergence properties are exhibited for various {\em TripleSpin}-matrices. Figure~\ref{fig:newtonSketchConvergence} illustrates the convergence of the Newton sketch algorithm, as measured by the optimality gap defined in~\cite{pilanci}, versus the iteration number. While it is clearly expected that sketched versions of the algorithm do not converge as quickly as the exact Newton-sketch approach, the figure confirms that various {\em TripleSpin}-matrices exhibit similar convergence behavior. 

As shown in~\cite{pilanci}, when the dimensionality of the problem increases, the computational cost of computing the Hessian in the exact Newton-sketch approach becomes very large, scaling as $\mathcal{O}(nd^2)$. The complexity of the structured Newton-sketch approach with \textit{TripleSpin}-matrices is only $\mathcal{O}(d n \log(n) + md^2)$. Wall-clock times of computing single Hessian matrices are illustrated in Figure~\ref{fig:newtonSketchConvergence}. This figure confirms that the increase in number of iterations of the Newton sketch compared to the exact sketch is compensated by the efficiency of sketched computations, in particular Hadamard-based sketches yield improvements at the lowest dimensions.

\begin{figure}[htb]
\centering
\begin{subfigure}[b]{.5\linewidth}
\centering
\includegraphics[width=\columnwidth]{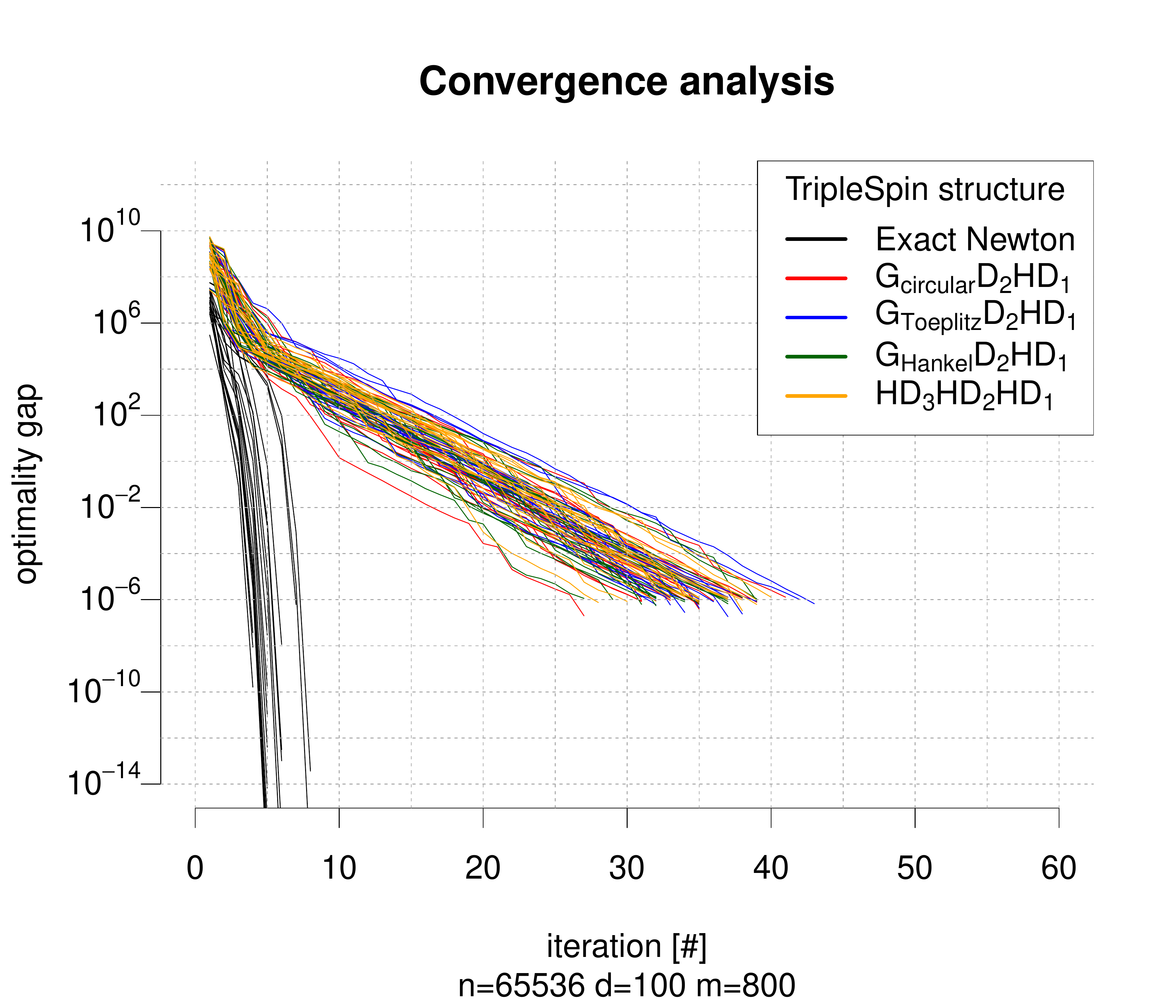}
\label{fig:newtonSketcha}
\end{subfigure}%
\begin{subfigure}[b]{.5\linewidth}
\centering
\includegraphics[width=\columnwidth]{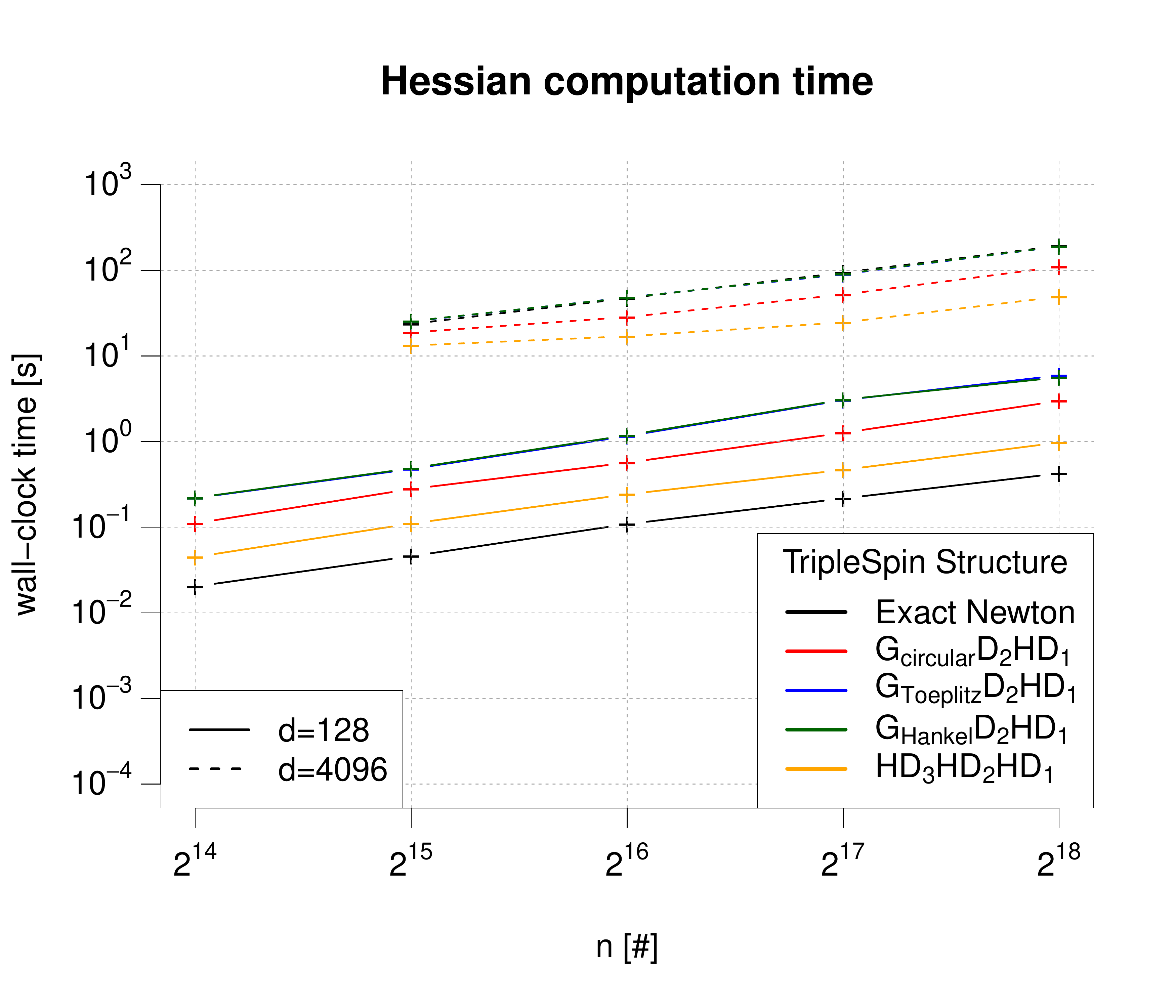}
\label{fig:newtonSketchb}
\end{subfigure}%
\caption{Numerical illustration of convergence (left) and computational complexity (right) of the Newton sketch algorithm with various {\em TripleSpin}-matrices. (left) Various sketching structures are compared in terms of convergence against iteration number. (right) Wall-clock times of {\em TripleSpin} structures are compared in various dimensionality settings.}
\label{fig:newtonSketchConvergence}
\end{figure}

\newpage
\small{
\bibliographystyle{unsrt}
\bibliography{three_struct_block}
}

\newpage
\section{Appendix}

In the Appendix we prove all theorems presented in the main body of the paper.
\subsection{Computing general kernels with \textit{TripleSpin}-matrices}

We prove here Theorem \ref{thm:stationary} and its non-stationary analogue.
For the convenience of the Reader, we restate both theorems.
We start with Theorem \ref{thm:stationary} that we restate as follows.

\textbf{Theorem \ref{thm:stationary} (stationary kernels)}
The family of functions
\begin{align*}
\kappa_K(\textbf{x},\textbf{y})&:=\sum_{k=1}^K\alpha_k (\mathbb{E}[\cos(\textbf{g}_k^\top\cdot \textbf{x})\cos(\textbf{g}_k^\top\cdot \textbf{y})] + \mathbb{E}[\sin(\textbf{g}_k^\top\cdot \textbf{x})\sin(\textbf{g}_k^\top\cdot \textbf{y})])
\end{align*}
with $\textbf{g}_k\sim\mathcal{N}(\mu_k,\mbox{diag}((\sigma_k^1)^2,...,(\sigma_k^d)^2))$, $\mu_k,\sigma_k\in\mathbb{R}^{n}$, $\alpha_k\in\mathbb{R}$, $K\in\mathbb{N}\cup\{\infty\}$ is dense in the family of stationary real-valued kernels with respect to pointwise convergence.


\begin{proof}
Theorem 3 of \cite{samo2015generalized} states that:

``Let $h$ be a real-valued positive semi-definite, continuous, and integrable function such that $\forall\tau\in\mathbb{R}^n,h(\tau)>0$. The family of functions
\begin{equation*}
\kappa_K(\tau):=\sum_{k=1}^K\alpha_k h(\tau\odot\sigma_k)\cos(2\pi\mu_k\tau)
\end{equation*}
with $\mu_k,\sigma_k\in\mathbb{R}^{+n}$, $\alpha_k\in\mathbb{R}$, $K\in\mathbb{N}\cup\{\infty\}$ is dense in the family of stationary real-valued kernels with respect to pointwise convergence.'' Here $\mathbb{R}^{+n}=\{x\in\mathbb{R}^{n}: x_i\geq 0\mbox{ for all }i=1,...,n \}\subset\mathbb{R}^{n}$.

Let $\odot$ denote the element-wise product.
If we choose $h(\tau)=\exp(-2\pi^2\|\tau\|^2)$, as suggested in \cite{samo2015generalized}, then it follows that
\begin{align}
\kappa_K(\tau)&:=\sum_{k=1}^K\alpha_k \exp(-2\pi^2\|\sigma_k\odot\tau\|^2)\cos(2\pi\mu_k^\top\cdot \tau) \notag\\
&=\sum_{k=1}^K\alpha_k \exp(-0.5(2\pi\|\sigma_k\odot\tau\|)^2)\cos(2\pi\mu_k^\top\cdot \tau) \notag\\
&=\sum_{k=1}^K\alpha_k  \int_{\mathbb{R}} \mathcal{N}(g;0,1)\cos(2\pi\|\sigma_k\odot\tau\|g)\cos(2\pi\mu_k^\top\cdot \tau)dg \label{eq:t13}\\
&=\sum_{k=1}^K\alpha_k  \int_{\mathbb{R}} \mathcal{N}(g;0,\|\sigma_k\odot\tau\|^2)\cos(2\pi g)\cos(2\pi\mu_k^\top\cdot \tau)dg \label{eq:t14}\\
&=\sum_{k=1}^K\alpha_k \int_{\mathbb{R}} \mathcal{N}(g;0,\|\sigma_k\odot\tau\|^2)[\cos(2\pi g)\cos(2\pi\mu_k^\top\cdot \tau)-\sin(2\pi g)\sin(2\pi\mu_k^\top\cdot \tau)]dg \label{eq:t15}\\
&=\sum_{k=1}^K\alpha_k \int_{\mathbb{R}} \mathcal{N}(g;0,\|\sigma_k\odot\tau\|^2)\cos(2\pi (g+\mu_k^\top\cdot \tau))dg \label{eq:t16}\\
&=\sum_{k=1}^K\alpha_k \int_{\mathbb{R}} \mathcal{N}(g;0,\tau^\top\mbox{diag}((\sigma_1^k)^2,...,(\sigma_d^k)^2)\tau)\cos(2\pi (g+\mu_k^\top\cdot \tau))dg \notag\\
&=\sum_{k=1}^K\alpha_k \int_{\mathbb{R}^n} \mathcal{N}(\textbf{g};0,\mbox{diag}((\sigma_1^k)^2,...,(\sigma_d^k)^2))\cos(2\pi (\textbf{g}^\top\cdot \tau+\mu_k^\top\cdot \tau))d\textbf{g} \label{eq:t18}\\
&=\sum_{k=1}^K\alpha_k \int_{\mathbb{R}^n} \mathcal{N}(\textbf{g};\mu_k,\mbox{diag}((\sigma_1^k)^2,...,(\sigma_d^k)^2))\cos(2\pi \textbf{g}^\top\cdot \tau)d\textbf{g} \notag\\
&=\sum_{k=1}^K\alpha_k \mathbb{E}[\cos(2\pi \textbf{g}_k^\top\cdot\tau)] \notag\\
&=\sum_{k=1}^K\alpha_k \mathbb{E}[\cos(2\pi \textbf{g}_k^\top\cdot(\textbf{x}-\textbf{y}))] \notag\\
&=\sum_{k=1}^K\alpha_k (\mathbb{E}[\cos(2\pi \textbf{g}_k^\top\cdot \textbf{x})\cos(2\pi \textbf{g}_k^\top\cdot \textbf{y})] + \mathbb{E}[\sin(2\pi \textbf{g}_k^\top\cdot \textbf{x})\sin(2\pi \textbf{g}_k^\top\cdot \textbf{y})]) \notag
\end{align}
is dense in the family of stationary real-valued kernels with respect to pointwise convergence.
Equation~(\ref{eq:t13}) follows from Bochner's theorem, (\ref{eq:t14}) from integration by substitution,
(\ref{eq:t15}) since sine is an odd function, (\ref{eq:t16}) from cosine angle sum identity,
(\ref{eq:t18}) from writing $g=\tau^\top\cdot\textbf{g}$ as linear transform of $\textbf{g}$. Absorbing $2\pi$ into $\mu_k$ and $\sigma_k$ and relaxing $\mu_k,\sigma_k\in \mathbb{R}^{+n}$ to $\mu_k,\sigma_k\in\mathbb{R}^{n}$ completes the proof.
\end{proof}





Now we will show the analogous version of that result for non-stationary kernels.

\begin{theorem}[non-stationary kernels]
The family of functions
\begin{equation*}
\kappa(\textbf{x},\textbf{y}) = \sum_{k=1}^K\alpha_k( \mathbb{E}[\cos( \textbf{g}_k^\top\cdot \textbf{x})\cos(\textbf{g}_k^\top\cdot \textbf{y})]+\mathbb{E}[\sin( \textbf{g}_k^\top\cdot \textbf{x})\sin( \textbf{g}_k^\top\cdot \textbf{y})]) \Psi_k(\textbf{x})^\top\Psi_k(\textbf{y})
\end{equation*}
where $\Psi_k(\textbf{x})=\left( \begin{array}{c}
\cos(\textbf{x}^\top\cdot \textbf{w}_k^1) + \cos( \textbf{x}^\top\cdot \textbf{w}_k^2)\\
\sin(\textbf{x}^\top\cdot \textbf{w}_k^1) + \sin( \textbf{x}^\top\cdot \textbf{w}_k^2) \end{array} \right)$, with $\textbf{g}_k\sim\mathcal{N}(0,\mbox{diag}((\sigma_1^k)^2,...,(\sigma_d^k)^2)), \sigma_k\in\mathbb{R}^{n}, \textbf{w}_k^1,\textbf{w}_k^2\in\mathbb{R}^n,\alpha_k\in\mathbb{R},K\in\mathbb{N}\cup\{\infty\}$ is dense in the family of real-valued continuous bounded non-stationary kernels with respect to the pointwise convergence of functions.
\end{theorem}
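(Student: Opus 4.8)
The plan is to repeat the proof of Theorem \ref{thm:stationary} almost verbatim, this time starting from the non-stationary density theorem of \cite{samo2015generalized} (the Yaglom-type analogue of their Theorem 3 quoted above) rather than from Bochner's theorem. That result asserts that, for any fixed real-valued positive semi-definite, continuous, integrable function $h$ with $h(\tau)>0$ everywhere, the family
\[
\kappa_K(\textbf{x},\textbf{y})=\sum_{k=1}^K\alpha_k\, h\big(\sigma_k\odot \textbf{x}-\sigma_k\odot \textbf{y}\big)\,\Psi_k(\textbf{x})^\top\Psi_k(\textbf{y}),
\]
with $\Psi_k$ the two-dimensional feature map built from the cross-frequencies $\textbf{w}_k^1,\textbf{w}_k^2$ as in the theorem statement, $\sigma_k,\textbf{w}_k^i\in\mathbb{R}^{+n}$, $\alpha_k\in\mathbb{R}$, $K\in\mathbb{N}\cup\{\infty\}$, is dense in the continuous bounded non-stationary kernels with respect to pointwise convergence. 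Note that expanding $\Psi_k(\textbf{x})^\top\Psi_k(\textbf{y})$ via $\cos(A-B)=\cos A\cos B+\sin A\sin B$ gives the symmetric sum of the four cosines $\cos(\textbf{x}^\top\textbf{w}_k^a-\textbf{y}^\top\textbf{w}_k^b)$, $a,b\in\{1,2\}$ --- precisely the symmetrization that makes each summand a genuine kernel, and collapsing $\textbf{w}_k^1=\textbf{w}_k^2$ recovers the stationary building block. I would take this representation as the starting point; the first, purely bibliographic, task is to match its parametrization to the $\Psi_k$ used here, up to the harmless rescaling of $\sigma_k,\textbf{w}_k^i$ and the absorption of multiplicative constants into $\alpha_k$.

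Next I would specialize $h$ to the Gaussian $h(\tau)=\exp(-2\pi^2\|\tau\|^2)$, exactly as in the proof of Theorem \ref{thm:stationary}, and replay the chain of identities (\ref{eq:t13})--(\ref{eq:t18}) --- Bochner's theorem, the substitution $g=\tau^\top \textbf{g}$, the oddness of $\sin$, and the cosine angle-sum identity --- to rewrite, for each $k$,
\[
h(\sigma_k\odot(\textbf{x}-\textbf{y}))=\int_{\mathbb{R}^n}\mathcal{N}\big(\textbf{g};0,\mbox{diag}((\sigma_1^k)^2,...,(\sigma_d^k)^2)\big)\cos\big(2\pi\,\textbf{g}^\top(\textbf{x}-\textbf{y})\big)\,d\textbf{g}=\mathbb{E}[\cos(2\pi\textbf{g}_k^\top \textbf{x})\cos(2\pi\textbf{g}_k^\top \textbf{y})]+\mathbb{E}[\sin(2\pi\textbf{g}_k^\top \textbf{x})\sin(2\pi\textbf{g}_k^\top \textbf{y})],
\]
where $\textbf{g}_k\sim\mathcal{N}(0,\mbox{diag}((\sigma_1^k)^2,...,(\sigma_d^k)^2))$ and the last equality is the cosine difference identity together with the vanishing of the $\mathbb{E}[\cos(2\pi\textbf{g}_k^\top(\textbf{x}+\textbf{y}))]$-type terms. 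Substituting back into $\kappa_K$, absorbing the remaining factors of $2\pi$ into $\sigma_k$ and $\textbf{w}_k^i$, and relaxing $\sigma_k,\textbf{w}_k^i\in\mathbb{R}^{+n}$ to $\mathbb{R}^n$ --- harmless, exactly as in the stationary case, since it only enlarges the family --- turns $\kappa_K$ into precisely the family in the statement, and density is inherited from the theorem of \cite{samo2015generalized}.

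The only genuine obstacle is making sure the starting representation is quoted with the correct symmetrization convention: I would need to verify that the non-stationary density theorem of \cite{samo2015generalized} actually delivers summands of the form $h(\sigma_k\odot(\textbf{x}-\textbf{y}))\,\Psi_k(\textbf{x})^\top\Psi_k(\textbf{y})$ with exactly this $\Psi_k$ --- rather than, say, the unsymmetrized two-cosine factor $\cos(\textbf{x}^\top\textbf{w}_k^1-\textbf{y}^\top\textbf{w}_k^2)+\cos(\textbf{x}^\top\textbf{w}_k^2-\textbf{y}^\top\textbf{w}_k^1)$, which fails to be positive semi-definite on its own. If their statement is in an unsymmetrized form, one short extra step is needed: fold in the two missing ``diagonal'' cosines $\cos((\textbf{x}-\textbf{y})^\top\textbf{w}_k^1)$ and $\cos((\textbf{x}-\textbf{y})^\top\textbf{w}_k^2)$ by re-indexing the outer sum (each being a permitted stationary term), after which the factor becomes $\Psi_k(\textbf{x})^\top\Psi_k(\textbf{y})$. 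Once the representation is in this form, the remainder is a line-by-line transcription of the proof of Theorem \ref{thm:stationary}.
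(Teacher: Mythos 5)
Your proposal is correct and follows essentially the same route as the paper: invoke the non-stationary density result of \cite{samo2015generalized} (their Theorem 7), specialize the positive semi-definite envelope to a Gaussian, rewrite it via the Bochner/characteristic-function identity as $\mathbb{E}[\cos(\textbf{g}_k^\top\cdot(\textbf{x}-\textbf{y}))]$, expand into the $\cos\cos+\sin\sin$ form, and absorb the $2\pi$ factors while relaxing $\sigma_k$ to $\mathbb{R}^n$. The symmetrization worry you raise does not arise, since Theorem 7 of \cite{samo2015generalized} is already stated with exactly the factor $\Psi_k(\textbf{x})^\top\Psi_k(\textbf{y})$ used here (and, incidentally, the passage from the difference cosine to the product form is the exact identity $\cos(A-B)=\cos A\cos B+\sin A\sin B$, so no extra terms need to vanish).
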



\begin{proof}
Theorem 7 of \cite{samo2015generalized} states that:

``Let $(\textbf{x}, \textbf{y}) \to \kappa^*(\textbf{x},\textbf{y})$ be a real-valued, positive semi-definite, continuous, and integrable function such that $\forall \textbf{x},\textbf{y},\kappa^*(\textbf{x},\textbf{y})>0$. The family
\begin{equation*}
\kappa(\textbf{x},\textbf{y}) = \sum_{k=1}^K\alpha_k \kappa^*(\textbf{x}\odot \sigma_k,\textbf{y}\odot \sigma_k) \Psi_k(\textbf{x})^\top\Psi_k(\textbf{y})
\end{equation*}
where $\Psi_k(\textbf{x})=\left( \begin{array}{c}
\cos(2\pi \textbf{x}^\top\cdot \textbf{w}_k^1) + \cos(2\pi \textbf{x}^\top\cdot \textbf{w}_k^2)\\
\sin(2\pi \textbf{x}^\top\cdot \textbf{w}_k^1) + \sin(2\pi \textbf{x}^\top\cdot \textbf{w}_k^2) \end{array} \right)$, with $\sigma_k\in\mathbb{R}^{n+}, \textbf{w}_k^1,\textbf{w}_k^2\in\mathbb{R}^n,\alpha_k\in\mathbb{R},K\in\mathbb{N}\cup\{\infty\}$ is dense in the family of real-valued continuous bounded non-stationary kernels with respect to the pointwise convergence of functions.''

If we choose as $\kappa^*$ the Gaussian kernel:
\begin{align*}
\kappa^*(\textbf{x},\textbf{y})&=\exp(-\|\textbf{x}-\textbf{y}\|^2/2)\\
&=\mathbb{E}[\cos(\textbf{g}^\top\cdot(\textbf{x}-\textbf{y}))]
\end{align*}
with $g\sim\mathcal{N}(0,I)$ then
\begin{align*}
\kappa^*(\textbf{x}\odot \sigma_k,\textbf{y}\odot \sigma_k)&=\mathbb{E}[\cos(\textbf{g}_k^\top\cdot(\textbf{x}-\textbf{y}))]\\
\end{align*}
with $\textbf{g}_k\sim\mathcal{N}(0,\mbox{diag}((\sigma_1^k)^2,...,(\sigma_d^k)^2))$.  Absorbing $2\pi$ into $w_k$ and relaxing $\sigma_k\in \mathbb{R}^{+n}$ to $\sigma_k\in\mathbb{R}^{n}$ completes the proof.
\end{proof}

\subsection{Structured machine learning algorithms with \textit{TripleSpin}-matrices}

We prove now Lemma \ref{simple_lemma}, Remark \ref{balanceness_remark},
as well as Theorem \ref{main_struct_theorem} and Theorem \ref{corollary_theorem}.

\subsubsection{Proof of Remark \ref{balanceness_remark}}

This result first appeared in \cite{ailon2006approximate}. The following proof was given in \cite{chor_sind_2016}, we repeat it here for completeness.
We will use the following standard concentration result.

\begin{lemma}(Azuma's Inequality)
Let $X_{1},...,X_{n}$ be a martingale and assume that $-\alpha_{i} \leq X_{i} \leq \beta_{i}$ for some positive constants $\alpha_{1},...,\alpha_{n}, \beta_{1},...,\beta_{n}$. 
Denote $X = \sum_{i=1}^{n} X_{i}$.
Then the following is true:
\begin{equation}
\mathbb{P}[|X - \mathbb{E}[X]| > a] \leq 2e^{-\frac{a^{2}}{2\sum_{i=1}^{n}(\alpha_{i} + \beta_{i})^{2}}}
\end{equation}
\end{lemma}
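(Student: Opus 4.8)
\textbf{Proof plan for Remark \ref{balanceness_remark}.}

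The goal is to show that $\textbf{HD}_1 \in \mathbb{R}^{n \times n}$ is $(\log(n), 2ne^{-\log^2(n)/8})$-balanced, i.e. for every unit vector $\textbf{x}$ we have $\mathbb{P}[\|\textbf{HD}_1\textbf{x}\|_\infty > \log(n)/\sqrt{n}] \le 2ne^{-\log^2(n)/8}$. First I would fix a unit vector $\textbf{x} = (x_1,\ldots,x_n)^T$ and a row index $i$, and write the $i$th coordinate of $\textbf{HD}_1\textbf{x}$ explicitly: since $\textbf{H}$ is the $L_2$-normalized Hadamard matrix, $(\textbf{HD}_1\textbf{x})_i = \sum_{j=1}^n H_{ij} d_j x_j$, where $d_1,\ldots,d_n$ are the i.i.d. $\{-1,+1\}$ diagonal entries and each $|H_{ij}| = 1/\sqrt{n}$. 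Set $Z_j = H_{ij} d_j x_j$; these are independent, mean-zero, and bounded: $-\alpha_j \le Z_j \le \beta_j$ with $\alpha_j = \beta_j = |x_j|/\sqrt{n}$. The partial sums form a martingale, so Azuma's Inequality (the lemma just stated) applies with $a = \log(n)/\sqrt{n}$.

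The Azuma bound gives $\mathbb{P}[|(\textbf{HD}_1\textbf{x})_i| > \log(n)/\sqrt{n}] \le 2\exp\!\big(-\frac{(\log(n)/\sqrt{n})^2}{2\sum_{j=1}^n (2|x_j|/\sqrt{n})^2}\big)$. The denominator is $2 \cdot \frac{4}{n}\sum_j x_j^2 = \frac{8}{n}$ because $\|\textbf{x}\|_2 = 1$, so the exponent simplifies to $-\frac{\log^2(n)/n}{8/n} = -\frac{\log^2(n)}{8}$, yielding $\mathbb{P}[|(\textbf{HD}_1\textbf{x})_i| > \log(n)/\sqrt{n}] \le 2e^{-\log^2(n)/8}$ for each fixed $i$. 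Then I would take a union bound over the $n$ rows: $\mathbb{P}[\|\textbf{HD}_1\textbf{x}\|_\infty > \log(n)/\sqrt{n}] \le \sum_{i=1}^n \mathbb{P}[|(\textbf{HD}_1\textbf{x})_i| > \log(n)/\sqrt{n}] \le 2ne^{-\log^2(n)/8}$, which is exactly the claimed balancedness with $\delta(n) = \log(n)$ and $p(n) = 2ne^{-\log^2(n)/8}$.

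There is really no serious obstacle here — the only things to be careful about are the constant bookkeeping (that the $\alpha_i + \beta_i$ in Azuma is $2|x_j|/\sqrt{n}$, not $|x_j|/\sqrt{n}$, so the factor of $4$ in the denominator is correct and cancels against the numerator to leave the clean $1/8$), and noting that the bound is uniform over $i$ so the union bound simply multiplies by $n$. I would also remark that the argument uses only $\|\textbf{x}\|_2 = 1$ and the fact that $\textbf{H}$ has all entries of magnitude $1/\sqrt{n}$, so nothing Hadamard-specific beyond flatness of the rows is needed. Since the statement is an equality of parameters rather than an asymptotic claim, I would present the computation exactly as above with no hidden constants.
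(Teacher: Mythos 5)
Your write-up is correct and follows the same route as the paper's argument (which, note, is really the proof of Remark \ref{balanceness_remark} --- Azuma's inequality itself is only quoted as a standard result, not proved): apply Azuma coordinate-wise to $(\textbf{HD}_{1}\textbf{x})_{i}=\sum_{j}h_{i,j}d_{j}x_{j}$ and then union-bound over the $n$ rows. In fact your bookkeeping is tighter than the paper's: the paper takes $\alpha_{i}=\beta_{i}=\frac{1}{\sqrt{n}}$, which in the stated form of Azuma's inequality would only yield the exponent $-\frac{\log^{2}(n)}{8n}$, whereas the claimed $-\frac{\log^{2}(n)}{8}$ requires the increment bounds $\alpha_{j}=\beta_{j}=\frac{|x_{j}|}{\sqrt{n}}$ combined with $\|\textbf{x}\|_{2}=1$, exactly as you use.
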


\begin{proof}
Denote by $\tilde{\textbf{x}}^{j}$ an image of $\textbf{x}^{j}$ under transformation $\textbf{HD}$. Note that the $i^{th}$ dimension of $\tilde{\textbf{x}}^{j}$
is given by the formula: $\tilde{x}^{j}_{i} = h_{i,1}x^{j}_{1} + ... + h_{i,n}x^{j,n}$,
where $h_{l,u}$ stands for the $l^{th}$ element of the $u^{th}$ column of the randomized
Hadamard matrix $\textbf{HD}$.
First we use Azuma's Inequality to find an upper bound on the probability
that $|\tilde{x}^{j}_{i}| > a$, where $a=\frac{\log(n)}{\sqrt{n}}$.
By Azuma's Inequality, we have:
\begin{equation}
\mathbb{P}[|h_{i,1}x^{j}_{1} + ... + h_{i,n}x^{j,n}| \geq a] \leq 2e^{-\frac{\log^{2}(n)}{8}}.
\end{equation}
We use: $\alpha_{i} = \beta_{i} = \frac{1}{\sqrt{n}}$.
Now we take union bound over all $n$ dimensions and the proof is completed.
\end{proof}

\subsubsection{\textit{TripleSpin}-equivalent definition}
We will introduce here equivalent definition of the $\textit{TripleSpin}$-model that is more technical (thus we did not give it in the main body of the paper), yet more convenient to work with in the proofs.

Note that from the definition of the \textit{TripleSpin}-family we can conclude that each structured matrix $\textbf{G}_{struct} \in \mathbb{R}^{n \times n}$ from the \textit{TripleSpin}-family is a product of three main structured blocks, i.e.:
\begin{equation}
\textbf{G}_{struct} = \textbf{B}_{3}\textbf{B}_{2}\textbf{B}_{1},
\end{equation}

where matrices $\textbf{B}_{1},\textbf{B}_{2},\textbf{B}_{3}$ satisfy two conditions that we give below.

\begin{framed}
\textbf{Condition 1:} Matrices: $\textbf{B}_{1}$ and $\textbf{B}_{2}\textbf{B}_{1}$ are $(\delta(n),p(n))$-balanced isometries. \\
\textbf{Condition 2:} Pair of matrices $(B_{2},B_{3})$
is $(K,\Lambda_{F}, \Lambda_{2})$-random.
\end{framed}

Below we give the definition of $(K, \Lambda_{F}, \Lambda_{2})$-randomness.

\begin{definition}[$(K, \Lambda_{F}, \Lambda_{2})$-randomness]
A pair of matrices $(\textbf{Y},\textbf{Z}) \in \mathbb{R}^{n \times n} \times \mathbb{R}^{n \times n}$ is $(K, \Lambda_{F}, \Lambda_{2})$-random
if there exist: $\textbf{r} \in \mathbb{R}^{k}$, and
a set of linear isometries $\phi = \{\phi_{1},...,\phi_{n}\}$, 
where $\phi_i : \mathbb{R}^{n} \rightarrow \mathbb{R}^{k}$, such that:
\begin{itemize}
\item $\textbf{r}$ is either a $\pm 1$-vector with i.i.d. entries
      or Gaussian with identity covariance matrix,
\item for every $\textbf{x} \in \mathbb{R}^{n}$ the $j^{th}$ element $(\textbf{Zx})_{j}$ of $\textbf{Zx}$ is of the form: $\textbf{r}^{T} \cdot \phi_{j}(\textbf{x})$,     
\item there exists a set of i.i.d. sub-Gaussian random variables $\{\rho_{1},...,\rho_{n}\}$ with sub-Gaussian norm at most $K$, mean $0$, the same second moments and a $(\Lambda_{F},\Lambda_{2})$-smooth set of matrices $\{\textbf{W}^{i}\}_{i=1,...,n}$ such that for every $\textbf{x} = (x_{1},...,x_{n})^{T}$ we have: $\phi_{i}(\textbf{Y}\textbf{x}) = \textbf{W}^{i} (\rho_{1}x_{1},...,\rho_{n}x_{n})^{T}$.
\end{itemize}
\end{definition}

\subsubsection{Proof of Lemma \ref{simple_lemma}}

\begin{proof}
Let us first assume the $\textbf{G}_{circ}\textbf{D}_{2}\textbf{HD}_{1}$ setting
(analysis for Toeplitz Gaussian or Hankel Gaussian is completely analogous).
In that setting it is easy to see that one can take $\textbf{r}$ to be a Gaussian vector (this vector corresponds to the first row of $\textbf{G}_{circ}$). Furthermore linear mappings $\phi_{i}$ are defined as: $\phi_{i}((x_{0},x_{1},...,x_{n-1})^{T}) = (x_{n-i},x_{n-i+1},...,x_{i-1})^{T}$, where operations on indices are modulo $n$.
The value of $\delta(n)$ and $p(n)$ come from the fact that matrix $\textbf{HD}_{1}$ is used as a $(\delta(n),p(n))$-balanced matrix and from Remark \ref{balanceness_remark}.
In that setting sequence $(\rho_{1},...,\rho_{n})$ is discrete and corresponds to the diagonal of $\textbf{D}_{2}$.
Thus we have: $K = 1$. To calculate $\Lambda_{F}$ and $\Lambda_{2}$, note first that matrix $\textbf{W}^{1}$ is defined as $\textbf{I}$ and subsequent $\textbf{W}^{i}$s are given as circulant shifts of the previous ones (i.e. each row is a circulant shift of the previous row). That observation comes directly from the circulant structure of $\textbf{G}_{circ}$. Thus we have: $\Lambda_{F} = O(\sqrt{n})$ and $\Lambda_{2} = O(1)$. The former is true since each $\textbf{A}^{i,j}$ has $O(n)$ nonzero entries and these are all $1$s. The latter is true since each nontrivial $\textbf{A}^{i,j}$ in that setting is an isometry (this comes straightforwardly from the definition of $\{\textbf{W}^{i}\}_{i=1,...,n}$).
Finally, all other conditions regarding $\textbf{W}^{i}$-matrices are clearly satisfied (each column of each $\textbf{W}^{i}$ has unit $L_{2}$ norm and corresponding columns from different $\textbf{W}^{i}$ and $\textbf{W}^{j}$ are clearly orthogonal).

Now let us consider the setting, where the structured matrix is of the form: $\sqrt{n}\textbf{HD}_{3}\textbf{HD}_{2}\textbf{HD}_{1}$.
In that case $\textbf{r}$ corresponds to a discrete vector (namely, the diagonal of $\textbf{D}_{2}$).
Linear mappings $\phi_{i}$ are defined as:
$\phi_{i}((x_{1},...,x_{n})^{T}) = (\sqrt{n}h_{i,1}x_{1},...,\sqrt{n}h_{i,n}x_{n})^{T}$, where $(h_{i,1},...,h_{i,n})^{T}$ is the $i^{th}$ row of $\textbf{H}$.
One can also notice that the set $\{\textbf{W}^{i}\}_{i=1,...,n}$ is defined as: $w^{i}_{a,b} = \sqrt{n} h_{i,a}h_{a,b}$.
Let us first compute the Frobenius norm of the matrix $\textbf{A}^{i,j}$ defined based on the aforementioned sequence $\{\textbf{W}^{i}\}_{i=1,...,n}$.
We have:
\begin{equation}
\|\textbf{A}^{i,j}\|_{F}^{2} = \sum_{l,t \in \{1,...,n\}} 
(\sum_{k=1}^{n} w^{j}_{k,l}w^{i}_{k,t})^{2} = n^{2}
\sum_{l,t \in \{1,...,n\}} (\sum_{k=1}^{n} h_{j,k}h_{k,l}h_{i,k}h_{k,t})^{2}
\end{equation}

To compute the expression above, note first that for $r_{1} \neq r_{2}$ we have:

\begin{equation}
\theta = \sum_{k,l} h_{r_{1},k}h_{r_{1},l}h_{r_{2},k}h_{r_{2},l}
 = \sum_{k} h_{r_{1},k}h_{r_{2},k} \sum_{l} h_{r_{1},l}h_{r_{2}, l} = 0,
\end{equation}
where the last equality comes from fact that different rows of $\ {H}$ are orthogonal. From the fact that $\theta = 0$ we get:
\begin{equation}
\|\textbf{A}^{i,j}\|_{F}^{2} = n^{2} \sum_{r=1,...,n} \sum_{k,l} 
h_{i,r}^{2}h_{j,r}^{2}h_{r,k}^{2}h_{r,l}^{2} = 
n \cdot n^{2} (\frac{1}{\sqrt{n}})^{8} \cdot n^{2} = n.
\end{equation}
Thus we have: $\Lambda_{F} \leq \sqrt{n}$.

Now we compute $\|\textbf{A}^{i,j}\|_{2}$.
Notice that from the definition of $\textbf{A}^{i,j}$ we get that
\begin{equation}
\textbf{A}^{i,j} = \textbf{E}^{i,j} \textbf{F}^{i,j},
\end{equation}
where the $l^{th}$ row of $\textbf{E}^{i,j}$ is of the form
$(h_{j,1}h_{1,l},...,h_{j,n}h_{n,l})$ and the $t^{th}$ column of
$\textbf{F}^{i,j}$ is of the form $(h_{i,1}h_{1,t},...,h_{i,n}h_{n,t})^{T}$.
Thus one can easily verify that $\textbf{E}^{i,j}$ and $\textbf{H}^{i,j}$ are isometries (since $\textbf{H}$ is) thus
$\textbf{A}^{i,j}$ is also an isometry and therefore $\Lambda_{2} = 1$. As in the previous setting, remaining conditions regarding matrices $\textbf{W}^{i}$ are trivially satisfied (from the basic properties of Hadamard matrices).
That completes the proof.

\end{proof}

\subsubsection{Proof of Theorem \ref{main_struct_theorem}}

Let us briefly give an overview of the proof before presenting it in detail. Challenges regarding proving accuracy results for structured matrices come from the fact that for any given $\textbf{x} \in \mathbb{R}^{n}$
different dimensions of $\textbf{y} = \textbf{G}_{struct}\textbf{x}$ are no longer independent (as it is the case for the unstructured setting).
For matrices from the \textit{TripleSpin}-family we can however show that with high probability different elements of $\textbf{y}$ correspond to projections of a given vector $\textbf{r}$ (see Section \ref{sec:model}) into directions that are close to orthogonal. The "close-to-orthogonality" characteristic is obtained with the use of the Hanson-Wright inequality that focuses on concentration results regarding quadratic forms involving vectors of sub-Gaussian random variables. If $\textbf{r}$ is Gaussian then from the well known fact that projections of the Gaussian vector into orthogonal directions are independent we can conclude that dimensions of $\textbf{y}$ are "close to independent". If $\textbf{r}$ is a discrete vector then we need to show that for $n$ large enough it "resembles" the Gaussian vector. This is where we need to apply the aforementioned techniques regarding multivariate Berry-Esseen-type central limit theorem results.

\begin{proof}
We will use notation from Section \ref{sec:model} and previous sections of the Appendix. 
We assume that the model with structured matrices stacked vertically, each of $m$ rows is applied. Without loss of generality we can assume that we have just one block since different blocks are chosen independently.
Let $\textbf{G}_{struct}$ be a matrix from the \textit{TripleSpin}-family. Let us assume that $\textbf{G}_{struct}$ is used by a function $f$ operating in the $d$-dimensional space and let us denote by $\textbf{x}^{1}$,...,$\textbf{x}^{d}$ some fixed orthonormal basis of that space.
Our first goal is to compute: $\textbf{y}^{1} = \textbf{G}_{struct} \textbf{x}^{1},...,\textbf{y}^{d} = \textbf{G}_{struct} \textbf{x}^{d}$.
Denote by $\tilde{\textbf{x}}^{i}$ the linearly transformed version of $\textbf{x}$ after applying block $\textbf{B}_{1}$, i.e. $\tilde{\textbf{x}}^{i} = \textbf{B}_{1} \textbf{x}^{i}$.
Since $\textbf{B}_{2}$ is $(\delta(n),p(n))$-balanced), we conclude that with probability at least: $p_{balanced} \geq 1 - dp(n)$ each element of each $\tilde{\textbf{x}}^{i}$ has absolute value at most $\frac{\delta(n)}{\sqrt{n}}$. We shortly say that each $\tilde{\textbf{x}}^{i}$ is $\delta(n)$-balanced.
We call this event $\mathcal{E}_{balanced}$.

Note that by the definition of the \textit{TripleSpin}-family, each $\textbf{y}^{i}$ is of the form: 
\begin{equation}
\textbf{y}^{i} = (\textbf{r}^{T} \cdot \phi_{1}(\textbf{B}_{2}\tilde{\textbf{x}}^{i}),...,\textbf{r}^{T} \cdot \phi_{m}(\textbf{B}_{2}\tilde{\textbf{x}}^{i}))^{T}.
\end{equation}

For clarity and to reduce notation we will assume that $\textbf{r}$ is $n$-dimensional.
To obtain results for vectors $\textbf{r}$ of different dimensionality $D$ it suffices to replace in our analysis and theoretical statements $n$ by $D$.
Let us denote $\mathcal{A} = \{\phi_{1}(\textbf{B}_{2}\tilde{\textbf{x}}^{1}),...,\phi_{m}(\textbf{B}_{2}\tilde{\textbf{x}}^{1}),...,\phi_{1}(\textbf{B}_{2}\tilde{\textbf{x}}^{d}),...,\phi_{m}(\textbf{B}_{2}\tilde{\textbf{x}}^{d}))\}$.
Our goal is to show that with high probability (in respect to random choices of $\textbf{B}_{1}$ and $\textbf{B}_{2}$) for all $\textbf{v}^{i},\textbf{v}^{j} \in \mathcal{A}$, $i \neq j$ the following is true:
\begin{equation}
\label{dot_product_equation}
|(\textbf{v}^{i})^{T} \cdot \textbf{v}^{j}| \leq t
\end{equation}
for some given $0 < t \ll 1$.

Fix some $t>0$. We would like to compute
the lower bound on the corresponding probability.
Let us fix two vectors $\textbf{v}^{1}, \textbf{v}^{2} \in \mathcal{A}$ and denote them as: $\textbf{v}^{1} = \phi_{i}(\textbf{B}_{2}\textbf{x})$, $\textbf{v}^{2} = \phi_{j}(\textbf{B}_{2} \textbf{y})$ for some $\textbf{x} = (x_{1},...,x_{n})^{T}$
and $\textbf{y} = (y_{1},...,y_{n})^{T}$.
Note that we have (see denotation from Section \ref{sec:model}):
\begin{equation}
\phi_{i}(\textbf{B}_{2}\textbf{x}) = (w^{i}_{11}\rho_{1}x_{1} + ... + w^{i}_{1,n}\rho_{n}x_{n},...,w^{i}_{n,1}\rho_{1}x_{1} + ... + w^{i}_{n,n}\rho_{n}x_{n})^{T}
\end{equation}
and
\begin{equation}
\phi_{j}(\textbf{B}_{2}\textbf{y}) = (w^{j}_{11}\rho_{1}y_{1} + ... + w^{j}_{1,n}\rho_{n}y_{n},...,w^{j}_{n,1}\rho_{1}y_{1} + ... + w^{j}_{n,n}\rho_{n}y_{n})^{T}.
\end{equation}

We obtain:
\begin{equation}
(\textbf{v}^{1})^{T} \cdot \textbf{v}^{2}=
\sum_{l \in \{1,...,n\}, u\in \{1,...,n\}} \rho_{l}\rho_{u}(\sum_{k=1}^{n}x_{l}y_{u}w^{i}_{k,u}w^{j}_{k,l}).
\end{equation}

We show now that under assumptions from Theorem \ref{main_struct_theorem} the expected 
value of the above expression is $0$.
We have:
\begin{equation}
\mathbb{E}[(\textbf{v}^{1})^{T} \cdot \textbf{v}^{2}]=
\mathbb{E}[\sum_{l \in \{1,...,n\}} \rho_{l}^{2}x_{l}y_{l}
(\sum_{k=1}^{n}w^{i}_{k,l}w^{j}_{k,l})],
\end{equation}
since $\rho_{1},...,\rho_{n}$ are independent
and have expectations equal to $0$.
Now notice that if $i \neq j$ then from the assumption that
corresponding columns of matrices $\textbf{W}^{i}$ and $\textbf{W}^{j}$ are orthogonal we get that the above expectation is $0$. Now assume that $i = j$. But then $\textbf{x}$ and $\textbf{y}$ have to be different and thus they are orthogonal (since they are taken from the orthonormal system transformed by an isometry).
In that setting we get:
\begin{equation}
\mathbb{E}[(\textbf{v}^{1})^{T} \cdot \textbf{v}^{2}]=
\mathbb{E}[\sum_{l \in \{1,...,n\}} \rho_{l}^{2}x_{l}y_{l}
(\sum_{k=1}^{n}(w^{i}_{k,l})^{2})]= \tau w \sum_{l=1}^{n} x_{l}y_{l} = 0,
\end{equation}
where $\tau$ stands for the second moment of each $\rho_{i}$,
$w$ is the squared $L_{2}$-norm of each column of $\textbf{W}^{i}$
($\tau$ and $w$ are well defined due to the properties of the \textit{TripleSpin}-family). The last inequality comes from the fact that $\textbf{x}$ and $\textbf{y}$ are orthogonal.
Now if we define matrices $\textbf{A}^{i,j}$ as in the definition of the \textit{TripleSpin}-model then we see that 
\begin{equation}
(\textbf{v}^{1})^{T} \cdot \textbf{v}^{2} = 
\sum_{l,u \in \{1,...,n\}} \rho_{l}\rho_{u}T^{i,j}_{l,u},
\end{equation}
where:
$T^{i,j}_{l,u} = x_{l}y_{u}A^{i,j}_{l,u}$.

Now we will use the following inequality:
\begin{theorem}[Hanson-Wright Inequality]
Let $\textbf{X} = (X_{1},...,X_{n})^{T} \in \mathbb{R}^{n}$ be a random vector with independent components $X_{i}$ which satisfy: $\mathbb{E}[X_{i}] = 0$ and have sub-Gaussian norm at most $K$ for some given $K>0$.
Let $\textbf{A}$ be an $n \times n$ matrix. Then for every $t \geq 0$
the following is true:
\begin{equation}
\mathbb{P}[\textbf{X}^{T}\textbf{A}\textbf{X} - \mathbb{E}[\textbf{X}^{T}\textbf{AX}] > t] \leq 
2e^{-c \min(\frac{t^{2}}{K^{4}\|\textbf{A}\|^{2}_{F}},
\frac{t}{K^{2}\|\textbf{A}\|_{2}})},
\end{equation}
\end{theorem}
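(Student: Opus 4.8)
The plan is to follow the standard route for the Hanson--Wright inequality (Rudelson--Vershynin). First I would split the centered quadratic form into its diagonal and off-diagonal contributions, $\textbf{X}^{T}\textbf{A}\textbf{X} - \mathbb{E}[\textbf{X}^{T}\textbf{A}\textbf{X}] = \sum_{i} A_{ii}(X_{i}^{2} - \mathbb{E}[X_{i}^{2}]) + \sum_{i \neq j} A_{ij} X_{i} X_{j}$, and treat them separately, combining at the end by a union bound on $t$ split as $t/2+t/2$. The diagonal term is a sum of independent, centered, sub-exponential variables of sub-exponential norm $O(K^{2}|A_{ii}|)$, so Bernstein's inequality gives a tail $2\exp(-c\min(t^{2}/(K^{4}\sum_{i}A_{ii}^{2}),\ t/(K^{2}\max_{i}|A_{ii}|)))$, which is dominated by the asserted bound since $\sum_{i}A_{ii}^{2}\le\|\textbf{A}\|_{F}^{2}$ and $\max_{i}|A_{ii}|\le\|\textbf{A}\|_{2}$. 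The substance is therefore the off-diagonal part $S:=\sum_{i\neq j}A_{ij}X_{i}X_{j}$, which we control through its moment generating function.

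For $S$ I would bound $\mathbb{E}[\exp(\lambda S)]$ for $|\lambda|$ in a suitable range and then apply Chernoff's bound. The ingredients are: (i) a decoupling inequality valid for exponential moments (de la Pe\~na--Montgomery-Smith), allowing me to replace $S$ by $\tilde S=\sum_{i,j}A_{ij}X_{i}X_{j}'$ with $\textbf{X}'$ an independent copy, at the cost of an absolute constant multiplying $\lambda$; (ii) conditioning on $\textbf{X}'$, so $\tilde S=\sum_{i}X_{i}(\textbf{A}\textbf{X}')_{i}$ is a linear combination of independent centered sub-Gaussians and hence $\mathbb{E}_{\textbf{X}}[\exp(\lambda\tilde S)\mid\textbf{X}']\le\exp(C\lambda^{2}K^{2}\|\textbf{A}\textbf{X}'\|_{2}^{2})$; and (iii) controlling $\mathbb{E}_{\textbf{X}'}[\exp(\mu\|\textbf{A}\textbf{X}'\|_{2}^{2})]$ for $\mu=C\lambda^{2}K^{2}$. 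For (iii) I would introduce an auxiliary standard Gaussian vector $\textbf{g}$ and use $\mathbb{E}_{\textbf{g}}[\exp(\sqrt{2\mu}\,\langle\textbf{g},\textbf{A}\textbf{X}'\rangle)]=\exp(\mu\|\textbf{A}\textbf{X}'\|_{2}^{2})$, swap the order of expectation, apply coordinatewise sub-Gaussianity of $\textbf{X}'$ to get $\mathbb{E}_{\textbf{X}'}[\exp(\sqrt{2\mu}\,\langle\textbf{A}^{T}\textbf{g},\textbf{X}'\rangle)]\le\exp(C'\mu K^{2}\|\textbf{A}^{T}\textbf{g}\|_{2}^{2})$, and then evaluate the resulting Gaussian chaos by diagonalizing $\textbf{A}\textbf{A}^{T}$: it equals $\prod_{k}(1-c''\mu K^{2}\sigma_{k}^{2})^{-1/2}\le\exp(C'''\mu K^{2}\|\textbf{A}\|_{F}^{2})$, where $\sigma_{k}$ are the singular values of $\textbf{A}$, provided $\mu K^{2}\|\textbf{A}\|_{2}^{2}$ stays below an absolute constant.

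Assembling (i)--(iii) yields $\mathbb{E}[\exp(\lambda S)]\le\exp(C\lambda^{2}K^{4}\|\textbf{A}\|_{F}^{2})$ valid for all $|\lambda|\le c/(K^{2}\|\textbf{A}\|_{2})$. Chernoff then gives $\mathbb{P}[S>t]\le\exp(-\lambda t+C\lambda^{2}K^{4}\|\textbf{A}\|_{F}^{2})$; optimizing over the admissible range produces the interior optimum $\exp(-ct^{2}/(K^{4}\|\textbf{A}\|_{F}^{2}))$ when $t\lesssim K^{2}\|\textbf{A}\|_{F}^{2}/\|\textbf{A}\|_{2}$ and the boundary optimum $\exp(-ct/(K^{2}\|\textbf{A}\|_{2}))$ otherwise, i.e.\ exactly the advertised $\min$. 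Folding in the diagonal estimate and passing from $\textbf{A}$ to $-\textbf{A}$ gives the two-sided form (the paper only needs the one-sided version, which is immediate).

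The step I expect to be the main obstacle is (iii): controlling $\mathbb{E}[\exp(\mu\|\textbf{A}\textbf{X}'\|_{2}^{2})]$ and, with it, pinning down the precise window $|\lambda|=\Theta(1/(K^{2}\|\textbf{A}\|_{2}))$ over which the MGF bound holds — the appearance of the second branch $t/(K^{2}\|\textbf{A}\|_{2})$ of the final $\min$ with the right constant depends entirely on that window being correct. The decoupling step is also delicate, since one must invoke a version of decoupling that survives under exponential (convex) functionals rather than merely under $L^{p}$-norms; fortunately such a statement is available, so this is a bookkeeping issue rather than a genuine difficulty.
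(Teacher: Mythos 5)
The paper does not prove this statement at all: the Hanson--Wright inequality is quoted in the appendix as a known external tool (in its modern sub-Gaussian form due to Rudelson and Vershynin) and is used as a black box inside the proof of Theorem \ref{main_struct_theorem}, so there is no internal argument to compare against. Your proposal reconstructs the standard Rudelson--Vershynin proof, and the outline is correct in all essentials: the diagonal/off-diagonal split with Bernstein for the diagonal (using $\sum_i A_{ii}^2 \le \|\textbf{A}\|_F^2$ and $\max_i |A_{ii}| \le \|\textbf{A}\|_2$), convex-function decoupling \`a la de la Pe\~na--Montgomery-Smith for the off-diagonal chaos, conditional sub-Gaussian MGF bounds, the auxiliary-Gaussian trick to reduce $\mathbb{E}\exp(\mu\|\textbf{A}\textbf{X}'\|_2^2)$ to a Gaussian chaos evaluated through the singular values, and Chernoff optimization over the window $|\lambda| \le c/(K^2\|\textbf{A}\|_2)$, which is exactly what produces the two branches of the $\min$. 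The only point needing care in a full write-up is bookkeeping you already flag implicitly: the decoupling inequality applies to the off-diagonal sum $\sum_{i\ne j}A_{ij}X_iX_j'$, whereas you write the decoupled chaos with the full index set $\sum_{i,j}$; either restrict to $i\ne j$ throughout or note that reinstating the diagonal only changes absolute constants (e.g.\ via $\|(\textbf{A}-\mathrm{diag}(\textbf{A}))\textbf{X}'\|_2^2 \lesssim \|\textbf{A}\textbf{X}'\|_2^2 + \|\mathrm{diag}(\textbf{A})\textbf{X}'\|_2^2$ and $\sum_i A_{ii}^2\le\|\textbf{A}\|_F^2$). With that fixed, your argument is a complete and correct proof of the inequality the paper merely imports.
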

where $c$ is some universal positive constant.

Note that, assuming $\delta(n)$-balancedness, we have: $\|\textbf{T}^{i,j}\|_{F} \leq \frac{\delta^{2}(n)}{n} \|\textbf{A}^{i,j}\|_{F}$ and $\|\textbf{T}^{i,j}\|_{2} \leq
\frac{\delta^{2}(n)}{n}
\|\textbf{A}^{i,j}\|_{2}$.

Now we take $\textbf{X} = (\rho_{1},...,\rho_{n})^{T}$ and $\textbf{A} = \textbf{T}^{i,j}$ in the theorem above.
Applying the Hanson-Wright inequality in that setting,
taking the union bound over all pairs of different vectors $\textbf{v}^{i},\textbf{v}^{j} \in \mathcal{A}$ (this number is exactly: ${md \choose 2}$) and the event $\mathcal{E}_{balanced}$, finally taking the union bound over all $s$ functions $f_{i}$, we conclude that with probability at least: 
\begin{equation}
\label{imp_equation}
p_{good} = 1 - p(n)ds - 2{md \choose 2}s e^{-\Omega(\min(\frac{t^{2}n^{2}}{K^{4}\Lambda_{F}^{2}\delta^{4}(n)}, \frac{tn}{K^{2}\Lambda_{2} \delta^{2}(n)}))}
\end{equation}

for every $f$ any two different vectors $\textbf{v}^{i}, \textbf{v}^{j} \in \mathcal{A}$ satisfy:
$|(\textbf{v}^{i})^{T} \cdot \textbf{v}^{j}| \leq t$.

Note that from the fact that $\textbf{B}_{2}\textbf{B}_{1}$ is $(\delta(n),p(n))$-balanced and from Equation \ref{imp_equation}, we get that with probability at least:
\begin{equation}
p_{right} = 1 - 2p(n)ds - 2{md \choose 2}s e^{-\Omega(\min(\frac{t^{2}n^{2}}{K^{4}\Lambda_{F}^{2}\delta^{4}(n)}, \frac{tn}{K^{2}\Lambda_{2} \delta^{2}(n)}))}.
\end{equation}

for every $f$ any two different vectors $\textbf{v}^{i}, \textbf{v}^{j} \in \mathcal{A}$ satisfy:
$|(\textbf{v}^{i})^{T} \cdot \textbf{v}^{j}| \leq t$ and furthermore each $\textbf{v}^{i}$ is $\delta(n)$-balanced.

Assume now that this event happens.
Consider the vector 
\begin{equation}
\textbf{q}^{\prime} = ((\textbf{y}^{1})^{T},...,(\textbf{y}^{d})^{T})^{T} \in \mathbb{R}^{md}. 
\end{equation}

Note that $\textbf{q}^{\prime}$ can be equivalently represented as:
\begin{equation}
\textbf{q}^{\prime} = 
(\textbf{r}^{T} \cdot \textbf{v}^{1},...,\textbf{r}^{T} \cdot \textbf{v}^{md}),
\end{equation}
where: $\mathcal{A} = \{\textbf{v}^{1},...,\textbf{v}^{md}\}$.
From the fact that $\phi_{i}\textbf{B}_{2}$ and $\textbf{B}_{1}$ are isometries we conclude that: $\|\textbf{v}^{i}\|_{2} = 1$ for $i=1,...$. 

Now we will need the following Berry-Esseen type result for random vectors:

\begin{theorem}[Bentkus \cite{bentkus2003dependence}]
\label{clt_theorem}
Let $\textbf{X}_{1},...,\textbf{X}_{n}$ be independent vectors taken from $\mathbb{R}^{k}$ with common mean $\mathbb{E}[\textbf{X}_{i}] = 0$. Let $\textbf{S} = \textbf{X}_{1} + ... + \textbf{X}_{n}$.
Assume that the covariance operator $\textbf{C}^{2} = cov(\textbf{S})$ is invertible. Denote $\beta_{i} = 
\mathbb{E}[\|\textbf{C}^{-1}\textbf{X}_{i}\|_{2}^{3}]$
and $\beta = \beta_{1} + ... + \beta_{n}$.
Let $\mathcal{C}$ be the set of all convex subsets of $\mathbb{R}^{k}$. Denote $\Delta(\mathcal{C}) = \sup_{A \in \mathcal{C}} |\mathbb{P}[S \in A]-\mathbb{P}[Z \in A]|$,
where $Z$ is the multivariate Gaussian distribution with mean $0$ and covariance operator $\textbf{C}^{2}$. Then:
\begin{equation}
\Delta(\mathcal{C}) \leq ck^{\frac{1}{4}} \beta
\end{equation}
for some universal constant $c$.
\end{theorem}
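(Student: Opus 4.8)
This is Bentkus's multidimensional Berry--Esseen bound over convex sets, and I would prove it by the Lindeberg replacement method combined with a smoothing inequality for convex sets whose dimension dependence is controlled by the Gaussian surface area of convex bodies. The first step is a normalization: replacing each $\textbf{X}_{i}$ by $\textbf{C}^{-1}\textbf{X}_{i}$ leaves $\Delta(\mathcal{C})$ unchanged (a linear bijection maps the class of convex sets onto itself and sends $Z$ to a standard Gaussian), turns $\mathrm{cov}(\textbf{S})$ into $\textbf{I}$, and leaves $\beta = \sum_{i}\mathbb{E}[\|\textbf{C}^{-1}\textbf{X}_{i}\|_{2}^{3}]$ exactly the third-moment sum of the normalized summands. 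So I may assume $\textbf{C} = \textbf{I}$ and must prove $\Delta(\mathcal{C}) \le ck^{1/4}\beta$; and since $\Delta(\mathcal{C}) \le 1$ always, I may also assume that $ck^{1/4}\beta$ lies below a fixed small constant, i.e. that $\beta$ is small.

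Fix a convex set $A \subseteq \mathbb{R}^{k}$ and a scale $\epsilon > 0$. Mollifying $\mathbf{1}_{A}$ by a fixed smooth bump rescaled to radius $\epsilon$ produces $f_{A,\epsilon} : \mathbb{R}^{k} \to [0,1]$ with $\mathbf{1}_{A} \le f_{A,\epsilon} \le \mathbf{1}_{A^{\epsilon}}$, with $\|D^{j}f_{A,\epsilon}\| = O(\epsilon^{-j})$ for $j \le 3$, and with all derivatives $D^{j}f_{A,\epsilon}$ supported in the shell $A^{\epsilon} \setminus A$ around $\partial A$. The geometric core of the argument is K.~Ball's reverse isoperimetric inequality, which says that the Gaussian surface area of the boundary of any convex body in $\mathbb{R}^{k}$ is $O(k^{1/4})$. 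This gives $\gamma_{k}(A^{\epsilon}\setminus A) \le c\epsilon k^{1/4}$ for the standard Gaussian $\gamma_{k}$, hence both the anti-concentration estimate $\mathbb{P}[Z \in A^{\epsilon}] - \mathbb{P}[Z \in A] \le c\epsilon k^{1/4}$ and the averaged derivative bounds $\mathbb{E}_{Z}[\|D^{j}f_{A,\epsilon}(Z)\|] \le c_{j}\epsilon^{-(j-1)}k^{1/4}$. This is the one and only place where the $k^{1/4}$ of the theorem enters.

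Now run Lindeberg: take independent Gaussians $\textbf{Y}_{1},\ldots,\textbf{Y}_{n}$ with $\mathrm{cov}(\textbf{Y}_{i}) = \mathrm{cov}(\textbf{X}_{i})$, set $\textbf{S}_{j} = \textbf{X}_{1} + \cdots + \textbf{X}_{j} + \textbf{Y}_{j+1} + \cdots + \textbf{Y}_{n}$ and $\textbf{U}_{j} = \textbf{S}_{j} - \textbf{X}_{j}$, and telescope $\mathbb{E}[f_{A,\epsilon}(\textbf{S})] - \mathbb{E}[f_{A,\epsilon}(Z)] = \sum_{j} \mathbb{E}[f_{A,\epsilon}(\textbf{U}_{j} + \textbf{X}_{j}) - f_{A,\epsilon}(\textbf{U}_{j} + \textbf{Y}_{j})]$. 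Taylor expanding each term to third order, the constant, first and second order terms cancel since $\textbf{X}_{j}$ and $\textbf{Y}_{j}$ are independent of $\textbf{U}_{j}$ and have matching first two moments, leaving a third-order remainder of the form $\sum_{j}\mathbb{E}[D^{3}f_{A,\epsilon}(\xi_{j})[\textbf{X}_{j}^{\otimes 3}] - D^{3}f_{A,\epsilon}(\eta_{j})[\textbf{Y}_{j}^{\otimes 3}]]$ with $\xi_{j},\eta_{j}$ on the segments emanating from $\textbf{U}_{j}$. The crude bound $|D^{3}f_{A,\epsilon}[\textbf{X}_{j}^{\otimes 3}]| \le \|D^{3}f_{A,\epsilon}\|_{\infty}\|\textbf{X}_{j}\|_{2}^{3}$ only yields $O(\epsilon^{-3}\beta)$, and balancing this against the smoothing error $c\epsilon k^{1/4}$ produces merely a fractional power of $\beta$; the sharp linear bound requires replacing $\xi_{j}$ by the near-Gaussian point $\textbf{U}_{j}$, and then by $Z$. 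Since $\textbf{U}_{j}$ is a sum of independent mean-zero vectors with covariance $\preceq \textbf{I}$ and third-moment mass $\le \beta$, it is itself close to $Z$ in the very sense being estimated; the shell-localized bound then controls the replacement error by $\mathbb{E}_{Z}[\|D^{3}f_{A,\epsilon}(Z)\|]\cdot\mathbb{E}\|\textbf{X}_{j}\|_{2}^{3} = O(\epsilon^{-2}k^{1/4})\cdot\mathbb{E}\|\textbf{X}_{j}\|_{2}^{3}$, plus a lower-order discrepancy that recurses back into $\Delta(\mathcal{C})$ itself. Assembling the pieces gives a self-improving inequality for $\Delta(\mathcal{C})$ which, solved in the small-$\beta$ regime and then optimized over $\epsilon$, yields $\Delta(\mathcal{C}) \le ck^{1/4}\beta$ upon taking the supremum over convex $A$.

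I expect the genuinely difficult step to be precisely this last one: quantifying, uniformly over all convex sets, the gain obtained by evaluating the third derivatives of $f_{A,\epsilon}$ at the intermediate sums $\textbf{U}_{j}$ rather than at worst-case points, and closing the recursion so that the final dependence is linear in $\beta$ and not in $\beta^{1/4}$. This is the multivariate counterpart of Esseen's smoothing device, and it is where Ball's surface-area bound must be threaded through; it is the long and technical part of Bentkus's original argument. Importing Ball's inequality and verifying that the mollification can be made shell-localized with the stated derivative estimates are the remaining, by-now-standard, ingredients.
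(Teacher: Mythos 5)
The paper does not prove this statement at all: Theorem~\ref{clt_theorem} is imported verbatim as a black-box result of Bentkus \cite{bentkus2003dependence}, so there is no in-paper proof to compare your attempt against. Judged on its own terms, your outline correctly reconstructs the standard architecture of Bentkus-type bounds: the reduction to $\textbf{C}=\textbf{I}$, the mollification of indicators of convex sets, the Lindeberg telescoping against Gaussian surrogates with matched first and second moments, and --- crucially --- the identification of K.~Ball's $O(k^{1/4})$ bound on the Gaussian surface area of convex bodies as the sole source of the dimension factor. These are indeed the right ingredients, and you correctly diagnose that a naive balance of the $O(\epsilon^{-3}\beta)$ Taylor remainder against the $O(\epsilon k^{1/4})$ smoothing error yields only a fractional power of $\beta$.

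However, as a proof the proposal has a genuine gap, and you name it yourself: the entire difficulty of Bentkus's theorem lies in upgrading the fractional-power bound to one linear in $\beta$, and your sketch only gestures at this via a ``self-improving inequality'' whose closure is asserted rather than carried out. The step of replacing the intermediate points $\xi_{j}$ by $\textbf{U}_{j}$ and then by $Z$, controlling the resulting discrepancy uniformly over all convex $A$ by recursing into $\Delta(\mathcal{C})$ itself, and verifying that the recursion actually terminates with a linear bound (Bentkus runs this as an induction on $n$ with carefully tracked constants) is the theorem; without it the argument proves only $\Delta(\mathcal{C}) \le c\,(k^{1/4}\beta)^{1/4}$ or similar. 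Since the paper treats this result purely as a cited tool, the pragmatic conclusion is that a full proof is neither present in the paper nor supplied by your outline; what you have written is an accurate roadmap to Bentkus's argument, not a substitute for it.
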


Denote: $\textbf{X}_{i} = (r_{i}v^{1}_{i},...,r_{i}v^{k}_{i})^{T}$
for $k=md$, $\textbf{r} = (r_{1},...,r_{n})^{T}$
and $\textbf{v}^{j} = (v^{j}_{1},...,v^{j}_{n})$.
Note that $\textbf{q}^{\prime} = \textbf{X}_{1} + ... + \textbf{X}_{n}$. Clearly we have: $\mathbb{E}[\textbf{X}_{i}] = 0$.
Furthermore, given the choices of $\textbf{v}^{1},...,\textbf{v}^{k}$, random vectors $\textbf{X}_{1},..,\textbf{X}_{n}$ are independent.

Let us calculate now the covariance matrix of $\textbf{q}^{\prime}$.
We have: 
\begin{equation}
\textbf{q}^{\prime}_{i} = r_{1}v^{i}_{1} + ... + r_{n}v^{i}_{n},
\end{equation}

where: $\textbf{q}^{\prime} = (\textbf{q}^{\prime}_{1},...,\textbf{q}^{\prime}_{k})$.

Thus for $i_{1}, i_{2}$ we have:

\begin{equation}
\mathbb{E}[\textbf{q}^{\prime}_{i_{1}} \textbf{q}^{\prime}_{i_{2}}] = 
\sum_{j=1}^{n} v^{i_{1}}_{j}v^{i_{2}}_{j}\mathbb{E}[r_{j}^{2}] + 2\sum_{1 \leq j_{1} < j_{2} \leq n} v^{i_{1}}_{j_{1}}v^{i_{2}}_{j_{2}} \mathbb{E}[r_{j_{1}}r_{j_{2}}]
= (\textbf{v}^{i_{1}})^{T} \cdot \textbf{v}^{i_{2}},
\end{equation}
where the last equation comes from the fact $r_{j}$ are either Gaussian from $\mathcal{N}(0,1)$ or discrete with entries from $\{-1,+1\}$ and furthermore different $r_{j}$s are independent.

Therefore if $i_1=i_2=i$, since each $\textbf{v}^{i}$ has unit $L_{2}$-norm, we have that 
\begin{equation}
\mathbb{E}[\textbf{q}^{\prime}_{i} \textbf{q}^{\prime}_{i}] = 1, 
\end{equation}
and for $i_{1} \neq i_{2}$ we get:
\begin{equation}
|\mathbb{E}[\textbf{q}^{\prime}_{i_{1}} \textbf{q}^{\prime}_{i_{2}}]| \leq t.
\end{equation}

We conclude that the covariance matrix $\Sigma_{\textbf{q}^{\prime}}$ of the distribution $\textbf{q}^{\prime}$ is a matrix with entries $1$ on the diagonal and other entries of absolute value at most $t$.

For $t = o_{k}(1)$ small enough 
and from the $\delta(n)$-balancedness of vectors $\textbf{v}^{1},...,\textbf{v}^{k}$ 
we can conclude that:
\begin{equation}
\mathbb{E}[\|\textbf{C}^{-1}\textbf{X}_{i}\|^{3}_{2}] =
O(\mathbb{E}[\|\textbf{X}_{i}\|^{3}_{2}]) = O(\sqrt{(\frac{k}{n})^{3}}\delta^{3}(n)),
\end{equation}

Now, using Theorem \ref{clt_theorem}, we conclude that
\begin{equation}
\sup_{A \in \mathcal{C}} |\mathbb{P}[\textbf{q}^{\prime} \in A] - \mathbb{P}[Z \in A]| = O(k^{\frac{1}{4}}n \cdot \frac{k^{\frac{3}{2}}}{n^{\frac{3}{2}}} \delta^{3}(n)) = O(\frac{\delta^{3}(n)}{\sqrt{n}}k^{\frac{7}{4}}),
\end{equation}
where $Z$ is taken from the multivariate Gaussian distribution with covariance matrix $\textbf{I} + \textbf{E}$. Now if we take $\eta = \frac{\delta^{3}(n)}{\sqrt{n}}k^{\frac{7}{4}}$, $\epsilon = t = o_{md}(1)$ and take $n$ large enough, the statement of the theorem follows.
\end{proof}

\subsubsection{Proof of Theorem \ref{corollary_theorem}}

\begin{proof}
This comes directly from Theorem \ref{main_struct_theorem} and Lemma \ref{simple_lemma}.
\end{proof}

\subsubsection{Proof of Theorem \ref{hopefully_last_theorem}}

\begin{proof}
For clarity we will assume that the structured matrix consists of just one block of $m$ rows and will compare its performance with the unstructured variant of $m$ rows (the more general case when the structured matrix is obtained by stacking vertically many blocks is analogous since the blocks are chosen independently). 

Consider the two-dimensional linear space $\mathcal{H}$ spanned by $\textbf{x}$ and $\textbf{y}$.
Fix some orthonormal basis $\mathcal{B} = \{\textbf{u}^{1},\textbf{u}^{2}\}$
of $\mathcal{H}$.
Take vectors \textbf{q} and $\textbf{q}^{\prime}$. 
Note that they are $2m$-dimensional, where $m$ is the number of rows of the block used in the structured setting.
From Theorem \ref{corollary_theorem} we conclude that will probability
at least $p_{success}$, where $p_{success}$ is as in the statement of the theorem the following holds for any convex $2m$-dimensional set $A$:
\begin{equation}
|\mathbb{P}[\textbf{q}(\epsilon) \in A] - \mathbb{P}[\textbf{q}^{\prime}]| \leq \eta,
\end{equation}
where $\eta = \frac{\log^{3}(n)}{n^{\frac{2}{5}}}$.
Take two corresponding entries of vectors $\textbf{v}_{\textbf{x},\textbf{y}}^{1}$ and $\textbf{v}_{\textbf{x},\textbf{y}}^{2}$ indexed by a pair $(\textbf{e}_{i}, \textbf{e}_{j})$ for some fixed $i,j \in \{1,...,m\}$.
Call them $p^{1}$ and $p^{2}$ respectively. Our goal is to compute 
$|p^{1} - p^{2}|$.
Notice that $p^{1}$ is the probability that $h(\textbf{x}) = \textbf{e}_{i}$
and $h(\textbf{y}) = \textbf{e}_{j}$ for the unstructured setting and $p^{2}$ is that probability for the structured variant.

Let us consider now the event $E^{1} = \{h(\textbf{x}) = \textbf{e}_{i}
\land h(\textbf{y}) = \textbf{e}_{j}\}$, where the setting is unstructured.
Denote the corresponding event for the structured setting as $E^{2}$.
Denote $\textbf{q} = (q_{1},...,q_{2m})$ (for vectors of the form: $-\textbf{e}_{i}$ and $-\textbf{e}_{j}$ the analysis is exactly the same).
Assume that $\textbf{x} = \alpha_{1} \textbf{u}^{1} + \alpha_{2} \textbf{u}^{2}$ for some scalars $\alpha_{1}, \alpha_{2} > 0$.
Denote the unstructured Gaussian matrix by $\textbf{G}$.
We have:
\begin{equation}
\textbf{Gx} = \alpha_{1}\textbf{G}\textbf{u}^{1} + 
              \alpha_{2}\textbf{G}\textbf{u}^{2}
\end{equation}
Note that we have: $\textbf{G}\textbf{u}^{1} = (q_{1},...,q_{m})^{T}$
and $\textbf{G}\textbf{u}^{2} = (q_{m+1},...,q_{2m})^{T}$.
Denote by $A(\textbf{e}_{i})$ the set of all the points in $\mathbb{R}^{m}$ such that their angular distance to $\textbf{e}_{i}$ is at most the angular distance to all other $m-1$ canonical vectors. Note that this is definitely the convex set.
Now denote:
\begin{equation}
Q(\textbf{e}_{i}) = \{(q_{1},...,q_{2m})^{T} \in \mathbb{R}^{2m} : \alpha_{1} (q_{1},...,q_{m})^{T} + \alpha_{2} (q_{m+1},...,q_{2m})^{T} \in A(\textbf{e}_{i})\}.
\end{equation}
Note that since $A(\textbf{e}_{i})$ is convex, we can conclude that $Q(\textbf{e}_{i})$ is also convex. 
Note that 
\begin{equation}
\{h(\textbf{x}) = \textbf{e}_{i}\} = \{\textbf{q} \in Q(\textbf{e}_{i})\}.
\end{equation}
By repeating the analysis for the event $\{h(\textbf{y}) = \textbf{e}_{j}\}$,
we conclude that:
\begin{equation}
\{h(\textbf{x}) = \textbf{e}_{i} \land h(\textbf{y}) = \textbf{e}_{j} \} = 
\{\textbf{q} \in Y(\textbf{e}_{i},\textbf{e}_{j})\}
\end{equation}
for convex set $Y(\textbf{e}_{i},\textbf{e}_{j}) = Q(\textbf{e}_{i}) \cap Q(\textbf{e}_{j})$.
Now observe that
\begin{equation}
|p^{1}-p^{2}| = 
|\mathbb{P}[\textbf{q} \in Y(\textbf{e}_{i},\textbf{e}_{j})] - 
\mathbb{P}[\textbf{q}^{\prime} \in Y(\textbf{e}_{i},\textbf{e}_{j})]|
\end{equation}

Thus we have:
\begin{equation}
|p^{1} - p^{2}| \leq 
|\mathbb{P}[\textbf{q} \in Y(\textbf{e}_{i},\textbf{e}_{j})] - 
\mathbb{P}[\textbf{q}(\epsilon) \in Y(\textbf{e}_{i},\textbf{e}_{j})]|+
|\mathbb{P}[\textbf{q}(\epsilon) \in Y(\textbf{e}_{i},\textbf{e}_{j})] - 
\mathbb{P}[\textbf{q}^{\prime} \in Y(\textbf{e}_{i},\textbf{e}_{j})]|
\end{equation}

Therefore we have:

\begin{equation}
|p^{1} - p^{2}| \leq |\mathbb{P}[\textbf{q} \in Y(\textbf{e}_{i},\textbf{e}_{j})] - 
\mathbb{P}[\textbf{q}(\epsilon) \in Y(\textbf{e}_{i},\textbf{e}_{j})]| + \eta.
\end{equation}

Thus we just need to upper-bound:

\begin{equation}
\xi = |\mathbb{P}[\textbf{q} \in Y(\textbf{e}_{i},\textbf{e}_{j})] - 
\mathbb{P}[\textbf{q}(\epsilon) \in Y(\textbf{e}_{i},\textbf{e}_{j})]|.
\end{equation}

Denote the covariance matrix of the distribution $\textbf{q}(\epsilon)$
as $\textbf{I} + \textbf{E}$. Note that $\textbf{E}$ is equal to $0$ on the diagonal and the 
absolute value of all other off-diagonal entries is at most $\epsilon$.

Denote $k = 2m$. We have

\begin{equation}
\xi = \left|\frac{1}{(2 \pi)^{\frac{k}{2}}\sqrt{\det (I+E)}} \int_{Y(\textbf{e}_{i},\textbf{e}_{j})} 
e^{-\frac{\textbf{x}^{T}(\textbf{I}+\textbf{E})^{-1}\textbf{x}}{2}}d\textbf{x} - 
\frac{1}{(2 \pi)^{\frac{k}{2}}} \int_{Y(\textbf{e}_{i},\textbf{e}_{j})} 
e^{-\frac{\textbf{x}^{T}\textbf{x}}{2}}d\textbf{x}\right|.
\end{equation}

Expanding: $(\textbf{I}+\textbf{E})^{-1}  = \textbf{I} - \textbf{E} + \textbf{E}^{2} - ...$, noticing that $|\det(I + E) - 1| = O(\epsilon^{2m})$,
and using the above formula, we easily get:
\begin{equation}
\xi  = O(\epsilon).
\end{equation}
That completes the proof.

\end{proof}

\subsection{Newton sketches - details of experiments}
Key attributes of the Newton sketch approach are a guaranteed 
super-linear convergence with exponentially high probability for self-concordant 
functions, and a reduced computational complexity compared to the original
second-order Newton method. Such characteristics are achieved using a sketched 
version of the Hessian matrix, in place of the original one. In the proposed experiment, 
the unconstrained large scale logistic regression is considered. Mathematically 
given a set of $n$ observations $\{(a_i,y_i)\}_{i=1..n}$, with $a_i \in 
\mathbb{R}^d$ and $y_i \in \{-1,1\}$, the logistic regression problem amounts to 
finding $x \in \mathbb{R}^d$ minimizing the cost function 
$$ f(x) = \sum_{i=1}^n \log (1 + \exp(-y_i a_i^T x)) \enspace .$$
The Newton approach to solving this optimization problem entails solving at each iteration the least squares equation $\nabla^2 f(x^t) \Delta^t = - \nabla f(x^t) $, where $$\nabla^2 f(x^t) = A^T \mathrm{diag} \left( \frac{1}{1+\exp(-a_i^T x)} (1 - \frac{1}{1+\exp(-a_i^T x)}) \right) A \in \mathbb{R}^{d \times d}$$ is the Hessian matrix of $f(x^t)$, $A = [a_1^T a_2^T \cdots a_n^T] \in \mathbb{R}^{n \times d}$, $\Delta^t = x^{t+1} - x^t$ is the increment at iteration $t$ and $\nabla f(x^t) \in \mathbb{R}^d$ is the gradient of the cost function. Authors of~\cite{pilanci} propose to rather consider the sketched version of the least square equation, based on a Hessian square root of $\nabla^2 f(x^t)$, denoted $\nabla^2 f(x^t)^{1/2} = \mathrm{diag} \left( \frac{1}{1+\exp(-a_i^T x)} (1 - \frac{1}{1+\exp(-a_i^T x)}) \right)^{1/2} A \in \mathbb{R}^{n \times d}$. The least squares problem becomes at each iteration $t$
$$ \left( (S^t \nabla^2 f(x^t)^{1/2})^T S^t \nabla^2 f(x^t)^{1/2} \right) \Delta^t = - \nabla f(x^t) \enspace ,$$
where $S^t \in \mathbb{R}^{m \times n}$ is a sequence of isotropic sketch matrices.
Let's finally recall that the gradient of the cost function is
$$ \nabla f(x^t) = \sum_{i=1}^n \left( \frac{1}{1+\exp(-y_i a_i^T x)}-1 \right) y_i a_i \enspace . $$ 

\subsection{Further experiments for kernel approximation}

For kernel approximation experiments, we used also the G50C dataset which contains 550 points of dimensionality 50 ($n = 50$) drawn from multivariate Gaussians (for Gaussian kernel, bandwidth $\sigma$ is set to $17.4734$). The results are averaged over $10$ runs. 

\paragraph{Results on the G50C dataset:} The following matrices have been tested: Gaussian random matrix $\textbf{G}$,  $\textbf{G}_{Toeplitz}\textbf{D}_{2}\textbf{HD}_{1}$ $\textbf{G}_{skew-circ}\textbf{D}_{2}\textbf{HD}_{1}$, $\textbf{HD}_{g_{1},...,g_{n}}\textbf{HD}_{2}\textbf{HD}_{1}$ and $\textbf{HD}_{3}\textbf{HD}_{2}\textbf{HD}_{1}$.

In Figure \ref{kernel_approx}, for the Gaussian kernel, all curves are almost identical. For both kernels, all \textit{TripleSpin}-matrices perform similarly to a random Gaussian matrix. 
$\textbf{HD}_{3}\textbf{HD}_{2}\textbf{HD}_{1}$ performs better than a random matrix and other \textit{TripleSpin}-matrices for a wide range of sizes of random feature maps. 

\begin{figure}[!t]
\centering
\includegraphics[scale = 0.19]{./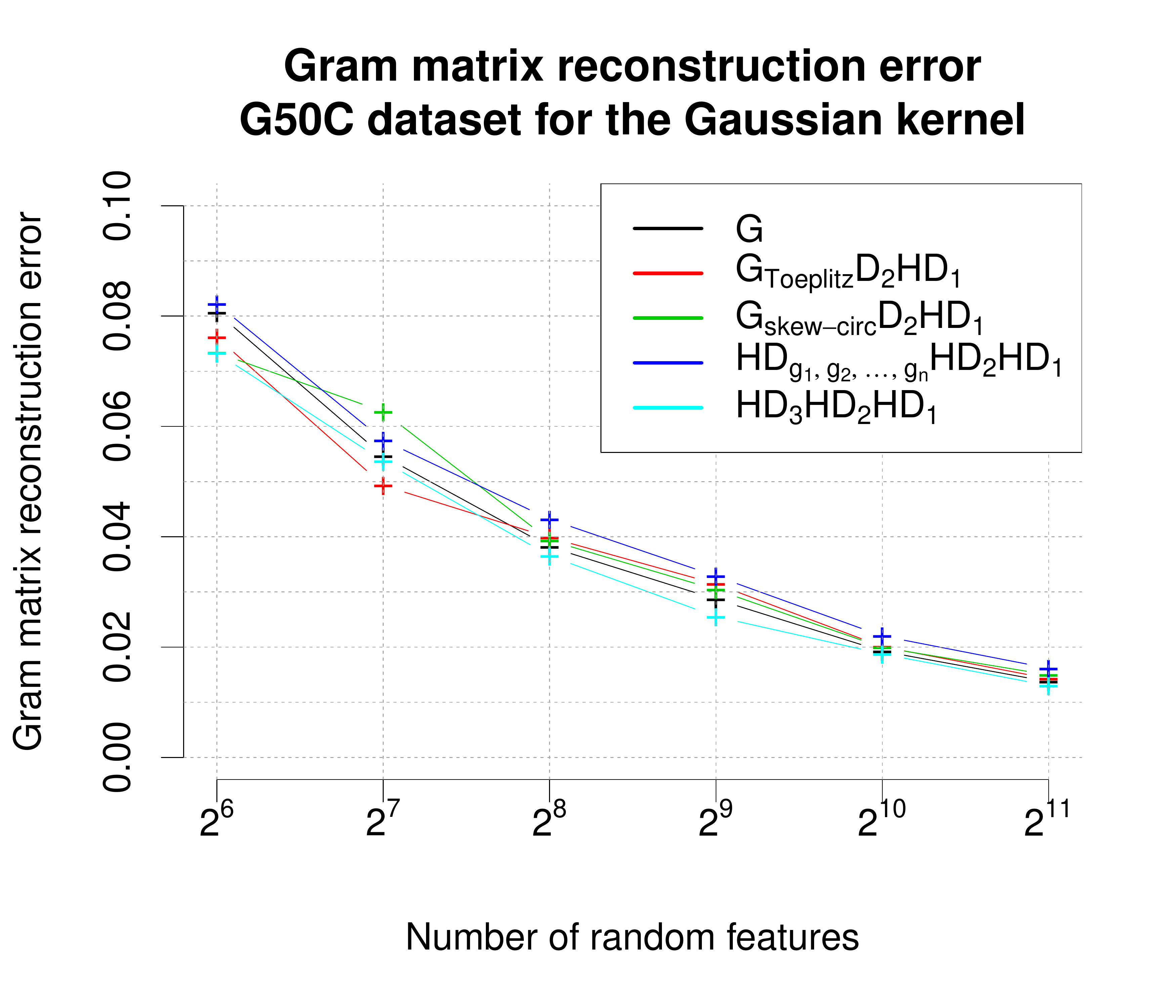}
\includegraphics[scale = 0.19]{./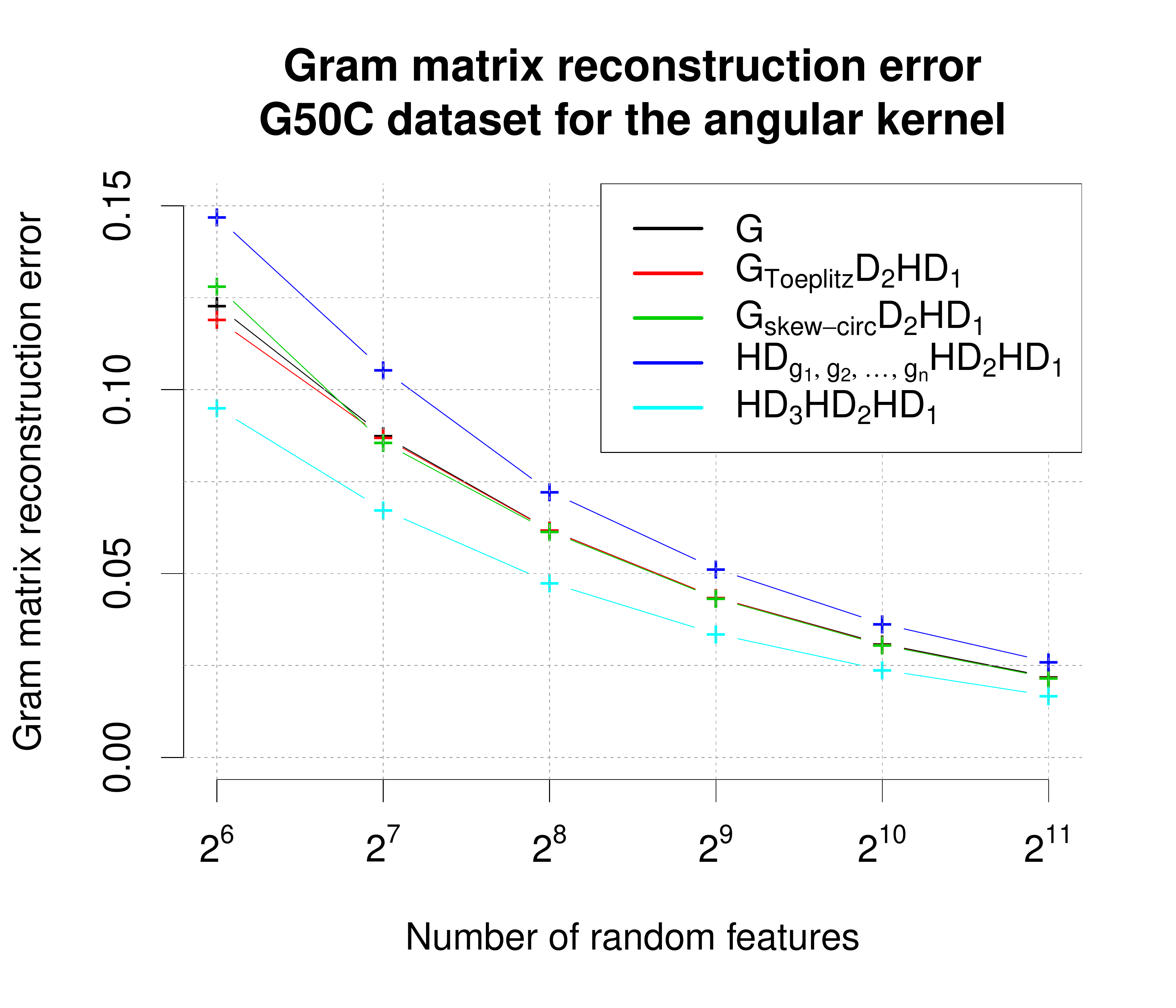}
\caption{Accuracy of random feature map kernel approximation}
\label{kernel_approx}
\end{figure}

\end{document}